\documentclass[twoside]{article}

\usepackage[accepted]{aistats2019}
\usepackage[hang,flushmargin]{footmisc}
%
%


\usepackage[numbers]{natbib}

\bibliographystyle{unsrtnat}


\usepackage{amsmath}
\usepackage{amsfonts}
\usepackage{amssymb}
\usepackage{amsthm}
\usepackage{bm}
\usepackage{bbm}
\usepackage{mathtools}
\usepackage{enumitem}
\usepackage{thmtools,thm-restate}
\usepackage{algorithm}
\usepackage{algorithmic}
\usepackage{natbib}
\usepackage{color}
\usepackage{graphicx}
\usepackage{comment}


\theoremstyle{plain}
\newtheorem{lemma}{Lemma}[section]

\theoremstyle{definition}

\theoremstyle{remark}

\def\AA{\mathcal{A}}

\def\TT{\mathcal{T}}

\def\Ebb{\mathbb{E}}

\def\Rbb{\mathbb{R}}

\def\R{\Rbb}

\def\diag{\mathrm{diag}}

\newcommand{\norm}[1]{ \| #1  \|  }

\newcommand{\lr}[2]{\langle #1, #2 \rangle}
\DeclareMathOperator*{\argmin}{arg\,min}


\newcommand{\E}{\Ebb}

\usepackage{etex,etoolbox}
\usepackage{environ}

\makeatletter
\providecommand{\@fourthoffour}[4]{#4}
\newcommand\fixstatement[2][\proofname\space of]{%
	\ifcsname thmt@original@#2\endcsname
	\AtEndEnvironment{#2}{%
		\xdef\pat@label{\expandafter\expandafter\expandafter
			\@fourthoffour\csname thmt@original@#2\endcsname\space\@currentlabel}%
		\xdef\pat@proofof{\@nameuse{pat@proofof@#2}}%
	}%
	\else
	\AtEndEnvironment{#2}{%
		\xdef\pat@label{\expandafter\expandafter\expandafter
			\@fourthoffour\csname #1\endcsname\space\@currentlabel}%
		\xdef\pat@proofof{\@nameuse{pat@proofof@#2}}%
	}%
	\fi
	\@namedef{pat@proofof@#2}{#1}%
}

\newcounter{proofcount}

\NewEnviron{proofatend}{%
	\edef\next{%
		\noexpand\begin{proof}[\pat@proofof\space\pat@label]%
			\unexpanded\expandafter{\BODY}}%
		\global\toks\numexpr\prooftoks+\value{proofcount}\relax=\expandafter{\next\end{proof}}
	\stepcounter{proofcount}}

\def\printproofs{%
	\count@=\z@
	\loop
	\the\toks\numexpr\prooftoks+\count@\relax
	\ifnum\count@<\value{proofcount}%
	\advance\count@\@ne
	\repeat}
\makeatother


\fixstatement{lemma}
\fixstatement{theorem}
\fixstatement{proposition}
\fixstatement{corollary}

\usepackage{microtype} 
\usepackage{graphicx}
\usepackage{rotating}
\usepackage{subfigure}
\usepackage{booktabs} 
\usepackage{multirow}
\usepackage{caption}

\graphicspath{ {../figures/} }

\def\d{\text{\normalfont d}}

\newif\ifLONG
\LONGfalse
\ifLONG 

\fi

\theoremstyle{plain}

\newtheorem*{theorem*}{\bf Theorem}
\newtheorem*{prop*}{\bf Proposition}
\newtheorem*{lem*}{\bf Lemma}

\begin{document}

%

%

\twocolumn[

\aistatstitle{Truncated Back-propagation for Bilevel Optimization}

\aistatsauthor{Amirreza Shaban* \And Ching-An Cheng* \And  Nathan Hatch \And Byron Boots}

\aistatsaddress{Georgia Institute of Technology \quad *Equal contribution}]

\begin{abstract}
Bilevel optimization has been recently revisited for designing and analyzing algorithms in hyperparameter tuning and meta learning tasks. However, due to its nested structure, evaluating exact gradients for high-dimensional problems is computationally challenging.
One heuristic to circumvent this difficulty is to use the approximate gradient given by performing truncated back-propagation through the iterative optimization procedure that solves the lower-level problem. Although promising empirical performance has been reported, its theoretical properties are still unclear. 
In this paper, we analyze the properties of this family of approximate gradients and establish sufficient conditions for convergence. 
We validate this on several hyperparameter tuning and meta learning tasks. We find that optimization with the approximate gradient computed using few-step back-propagation
often performs comparably to optimization with the exact gradient, while requiring far less memory and half the computation time.
\end{abstract}

\vspace{-2mm}
\section{INTRODUCTION}
\vspace{-1mm}

Bilevel optimization has been recently revisited as a theoretical framework for designing and analyzing algorithms for hyperparameter optimization~\citep{domke2012generic} and meta learning~\citep{franceschi2017bridge}.
Mathematically, these problems can be formulated as a stochastic optimization problem with an equality constraint (see Section \ref{sec:applications}):
\begin{align}  \label{eq:bilevel optimization}
\begin{split}
\min_{\lambda } 
F(\lambda) &:= \E_{S} \left[ f_S(\hat{w}_S^*(\lambda), \lambda) \right]    \\[-3pt]
\text{s.t.} \quad   \hat{w}_S^*(\lambda) &\approx_\lambda \argmin_{w } g_S(w, \lambda) 
\end{split}
\end{align} 
where $w$ and $\lambda$ are the \emph{parameter} and the \emph{hyperparameter}, 
$F$ and $f_S$ are the expected and the sampled \emph{upper-level objective}, $g_S$ is the sampled \emph{lower-level objective}, and $S$ is a random variable called the \emph{context}.
The notation $\approx_\lambda$ means that $\hat{w}_S^*(\lambda)$ equals the unique return value of a prespecified iterative algorithm (e.g. gradient descent) that approximately finds a local minimum of $g_S$.
This algorithm is part of the problem definition and can also be parametrized by $\lambda$ (e.g. step size).
The motivation to explicitly consider the approximate solution $\hat w_S^*(\lambda)$ rather than an exact minimizer $w_S^*$ of $g_S$ is that $w_S^*$ is usually not available in closed form.
This setup enables $\lambda$ to account for the imperfections of the lower-level optimization algorithm. 

Solving the bilevel optimization problem in~\eqref{eq:bilevel optimization} is challenging due to the complicated dependency of the upper-level problem on $\lambda$ induced by $\hat{w}_S^*(\lambda)$. 
This difficulty is further aggravated when $\lambda$ and $w$ are high-dimensional, precluding the use of black-box optimization techniques such as grid/random search~\citep{bergstra2012random} and Bayesian optimization~\citep{srinivas2009gaussian,snoek2012practical}.

Recently, first-order bilevel optimization techniques have been revisited to solve these problems. These methods rely 
on an estimate of the Jacobian $\nabla_\lambda \hat{w}_S^*(\lambda)$ to optimize $\lambda$. 
\citet{pedregosa2016hyperparameter} and \citet{gould2016differentiating} assume that $\hat{w}_S^*(\lambda) = w_S^*$ and compute $\nabla_\lambda \hat{w}_S^*(\lambda)$ 
by implicit differentiation. 
By contrast, \citet{maclaurin2015gradient} and \citet{franceschi2017forward} treat the iterative optimization algorithm in the lower-level problem as a dynamical system, and compute $\nabla_\lambda \hat{w}_S^*(\lambda)$ 
by automatic differentiation through the dynamical system. In comparison, the latter approach is less sensitive to the optimality of $\hat w_S^*(\lambda)$ and can also learn hyperparameters that control the lower-level optimization process (e.g. step size). 
However, due to superlinear time or space complexity (see Section \ref{sec:hypergradient}), neither of
these methods is applicable when both $\lambda$ and $w$ are high-dimensional~\citep{franceschi2017forward}.
%

Few-step reverse-mode automatic differentiation~\citep{luketina2016scalable,baydin2017online} and few-step forward-mode automatic differentiation~\citep{franceschi2017forward} have recently been proposed as heuristics to address this issue.  
 By ignoring long-term dependencies, the time and space complexities to compute approximate gradients can be greatly reduced.
While exciting empirical results have been reported, the theoretical properties of these methods remain unclear. 

In this paper, we study the theoretical properties of these \emph{truncated back-propagation} approaches.
We show that, when the lower-level problem is locally strongly convex around $\hat{w}_S^*(\lambda)$, on-average convergence to an $\epsilon$-approximate stationary point is guaranteed by $O(\log 1/\epsilon)$-step truncated back-propagation.
We also identify additional problem structures for which asymptotic convergence to an exact stationary point is guaranteed.
Empirically, we verify the utility of this strategy for hyperparameter optimization and meta learning tasks.
We find that, compared to optimization with full back-propagation, optimization with truncated back-propagation usually shows competitive performance while requiring half as much computation time and significantly less memory.


\vspace{-2mm}
\subsection{Applications}\label{sec:applications}
\vspace{-1mm}

\paragraph{Hyperparameter Optimization}

The goal of hyperparameter optimization \citep{larsen1996design,bengio2000gradient} is to find hyperparameters $\lambda$ for an optimization problem $P$ such that the approximate solution  $\hat w^*(\lambda)$ of $P$ has low cost $c(\hat w^*(\lambda))$ for some cost function $c$. 
In general, $\lambda$ can parametrize both the objective of $P$ and the algorithm used to solve $P$.
This setup is a special case of the bilevel optimization problem~\eqref{eq:bilevel optimization} where the upper-level objective $c$ does not depend directly on $\lambda$.
In contrast to meta learning (discussed below), $c$ can be deterministic \citep{franceschi2017forward}.
See Section~\ref{sec:hyperparameter optimization} for examples.

Many low-dimensional problems, such as choosing the learning rate and regularization constant for training neural networks, can be effectively solved with grid search.
However, problems with thousands of hyperparameters are increasingly common, for which gradient-based methods are more appropriate~\citep{maclaurin2015gradient, chen2014insights}.

\vspace{-1mm}
\paragraph{Meta Learning}
Another important application of bilevel optimization, meta learning (or learning-to-learn) uses statistical learning to optimize an algorithm $\AA_\lambda$ over a distribution of tasks $\TT$ and contexts $S$:
\begin{align} \label{eq:meta learning}
\min_{\lambda } \E_{\TT} \E_{S | \TT} \left[ c_{\TT} ( \AA_{\lambda}(S)) \right]. 
\end{align}
It treats $\AA_{\lambda}$ as a parametric function, with hyperparameter $\lambda$, that takes task-specific context information $S$ as input and outputs a decision $\AA_{\lambda}(S)$. The goal of meta learning is to optimize the algorithm's performance $c_{\TT}$ (e.g. the generalization error) across tasks $\TT$ through empirical observations. This general setup subsumes multiple problems commonly encountered in the machine learning literature, such as multi-task learning~\citep{caruana1998multitask,ranjan2017hyperface} and few-shot learning~\citep{fei2006one,ravi2016optimization,snell2017prototypical}. 


Bilevel optimization emerges from meta learning when the algorithm computes $\AA_{\lambda}(S)$ by internally solving a \textit{lower-level} minimization problem with variable $w$.
The motivation to use this class of algorithms is that the lower-level problem can be designed so that, even for tasks $\TT$ distant from the training set, $\AA_{\lambda}$ falls back upon a sensible optimization-based approach \citep{finn2017model,baydin2017online}. By contrast, treating $\AA_{\lambda}$ as a general function approximator relies on the availability of a large amount of meta training data~\citep{andrychowicz2016learning,li2017learning}.

In other words, the decision is $\AA_{\lambda}(S) = (\hat{w}_S^*(\lambda), \lambda)$ where $\hat w_S^*(\lambda)$ is an approximate minimizer of some function $g_S(w, \lambda)$.
Therefore, we can identify 
\begin{align} \label{eq:definition of f}
\E_{\TT | S} \left[ c_{\TT}(\hat{w}_S^*(\lambda), \lambda) \right] =: f_S(\hat{w}_S^*(\lambda), \lambda)
\end{align}
and write~\eqref{eq:meta learning} as~\eqref{eq:bilevel optimization}.\footnote{
We have replaced $\E_\TT \E_{S|\TT}$ with $\E_S \E_{\TT|S}$, which is valid since both describe the expectation over the joint distribution. The algorithm $\AA_{\lambda}$ only perceives $S$, not $\TT$.}
Compared with $\lambda$, the lower-level variable $w$ is usually task-specific and fine-tuned based on the given context $S$.
For example, in few-shot learning, a warm start initialization or regularization function ($\lambda$) can be learned through meta learning, so that a task-specific network ($w$) can be quickly trained using regularized empirical risk minimization with few examples $S$.
See Section \ref{sec:omniglot} for an example.

\vspace{-2mm}
\section{BILEVEL OPTIMIZATION}
\vspace{-2mm}
\subsection{Setup} \label{sec:setup}
\vspace{-1mm}
Let $\lambda \in \R^N$ and $w \in \R^M$.
We consider solving~\eqref{eq:bilevel optimization} with first-order methods that sample $S$ (like stochastic gradient descent) and focus on the problem of computing the gradients for a given $S$. Therefore, we will simplify the notation below by omitting the dependency of variables and functions on $S$ and $\lambda$ (e.g. we write $\hat{w}_S^*(\lambda)$ as $\hat{w}^*$ and $g_S$ as $g$). We use $\d_x$ to denote the total derivative with respect to a variable $x$, and $\nabla_x$ to denote the partial derivative,
with the convention that $\nabla_\lambda f \in \R^N$ and $\nabla_\lambda \hat{w}^* \in \R^{N \times M}$. 

To optimize $\lambda$, stochastic first-order methods use estimates of the gradient $\d_{\lambda} f = \nabla_{\lambda} f + \nabla_{\lambda} \hat{w}^* \nabla_{\hat{w}^*} f$.
Here we assume that both $\nabla_{\lambda} f \in \R^N$ and $\nabla_{\hat{w}^*} f \in \R^M$ are available through a stochastic first-order oracle, and focus on the problem of computing the matrix-vector product $\nabla_{\lambda} \hat{w}^*  \nabla_{\hat{w}^*} f $ when both $\lambda$ and $w $ are high-dimensional.

\vspace{-2mm}
\subsection{Computing the hypergradient}\label{sec:hypergradient}
\vspace{-1mm}
Like \citep{maclaurin2015gradient,franceschi2017forward}, we treat the iterative optimization algorithm that solves the lower-level problem as a dynamical system.
Given an initial condition $w_0 = \Xi_{0}(\lambda)$ at $t=0$, the update rule can be written as\footnote{For notational simplicity, we consider the case where $w_t$ is the state of~\eqref{eq:dynamics}; our derivation can be easily generalized to include other internal states, e.g. momentum.}
\begin{align}
 w_{t+1} = \Xi_{t+1}( w_t, \lambda  ), \qquad  \hat{w}^{*} = w_T
 \label{eq:dynamics}
\end{align} 
in which $\Xi_t$ defines the transition and and $T$ is the number iterations performed. For example, in gradient descent, $\Xi_{t+1} (w_t, \lambda) = w_t - \gamma_t (\lambda) \nabla_{w} g (w_t, \lambda) $, where 
$\gamma_t (\lambda)$ is the step size.

By unrolling the iterative update scheme~\eqref{eq:dynamics} as a computational graph, we can view $\hat{w}^{*}$ as a function of $\lambda$ and compute the required derivative $\d_{\lambda} f $~\citep{baydin2015automatic}. Specifically, it can be shown by the chain rule\footnote{Note that this assumes $g$ is twice differentiable.}
\begin{align} \textstyle
 \d_{\lambda} f  =   \nabla_\lambda f  + \sum_{t=0}^{T}  B_{t} A_{t+1} \cdots A_{T}  \nabla_{\hat{w}^*} f \label{eq:unrolling}
\end{align}
where $A_{t+1} = \nabla_{w_t} \Xi_{t+1}(w_t, \lambda)$,  $B_{t+1} = \nabla_{\lambda} \Xi_{t+1}(w_t, \lambda )$  for $t \geq 0$, and $ B_0 = \d_{\lambda} \Xi_0 (\lambda) $.

The computation of~\eqref{eq:unrolling} can be implemented either in reverse mode or forward mode~\citep{franceschi2017forward}. 
Reverse-mode differentiation (RMD) computes~\eqref{eq:unrolling} by  back-propagation:
\begin{equation} \label{eq:RMD}
\begin{split}
\alpha_{T} &= \nabla_{\hat{w}^*} f , \quad h_{T} = \nabla_\lambda f,  \\[-1pt]
h_{t-1} &= h_{t} + B_{t} \alpha_{t} ,  \quad \alpha_{t-1} = A_{t} \alpha_{t}
\end{split}
\end{equation}
and finally $\d_\lambda f  = h_{-1}$. Forward-mode differentiation (FMD) computes~\eqref{eq:unrolling} by forward propagation:
\begin{align} \label{eq:FMD}
\begin{split} 
Z_0 &= B_0, \quad Z_{t+1} =  Z_t A_{t+1} + B_{t+1}, \\[-1pt]
 \d_\lambda f  &= Z_{T} \nabla_{\hat{w}^*} f + \nabla_\lambda f
\end{split}
\end{align}

\begin{table}[t]
  \caption{Comparison of the additional time and space to compute $\d_{\lambda} f = \nabla_{\lambda} f + \nabla_{\lambda} \hat{w}^* \nabla_{\hat{w}^*} f $, 
  where $\lambda \in \R^N$, $w~\in~\R^M$, and $c = c(M,N)$ is the time complexity to compute the transition function $\Xi$. 
  $^\dagger$Checkpointing doubles the constant in time complexity, compared with other approaches.}
\label{table:complexity-comparison} 
\begin{center}
\begin{small}
\begin{sc}
\begin{tabular}{lcccr}
\toprule
Method & Time & Space & Exact \\
\midrule 
FMD & $O(cNT)$ & $O(MN)$ & \checkmark   \\
RMD & $O(cT)$  & $O(MT)$ & \checkmark   \\
Checkpointing & $O(cT^\dagger)$ & $O(M\sqrt{T})$ & \checkmark \\
every $\sqrt{T}$ steps$^\dagger$  \\
$K$-RMD & $O(cK)$  & $O(MK)$ &  \\
\bottomrule
\end{tabular}
\end{sc}
\end{small}
\end{center}
\vskip -2mm
\end{table}
The choice between RMD and FMD is a trade-off based on the size of $w \in \mathbb{R}^M$ and $\lambda \in \mathbb{R}^N$ (see Table~\ref{table:complexity-comparison} for a comparison). 
For example, one drawback of RMD is that all the intermediate variables $\{w_t \in \R^M\}_{t=1}^T$ need to be stored in memory in order to compute $A_t$ and $B_t$ in the backward pass. Therefore, RMD is only applicable when $MT$ is small, as in~\citep{finn2017model}. 
Checkpointing~\citep{hascoet2006enabling} can reduce this to $M\sqrt{T}$, but it \emph{doubles} the computation time.
Complementary to RMD, FMD propagates the matrix $Z_t \in \R^{M \times N}$ in line with the forward evaluation of the dynamical system~\eqref{eq:dynamics}, and does not require any additional memory to save the intermediate variables.
However, propagating the matrix $Z_t$ instead of vectors requires memory of size $MN$ and is $N$-times slower compared with RMD.

\vspace{-2mm}
\section{TRUNCATED BACK-PROPAGATION\label{sec:theory}}
\vspace{-1mm}
In this paper, we investigate approximating~\eqref{eq:unrolling} with partial sums, which was previously proposed as a heuristic for bilevel optimization (\citep{luketina2016scalable} Eq. 3, \citep{baydin2017online} Eq. 2). Formally, we perform $K$-step truncated back-propagation ($K$-RMD) and use the intermediate variable $h_{T-K}$ to construct an approximate gradient:
\begin{align} \textstyle
h_{T-K} =  \nabla_\lambda f + \sum_{t=T-K+1}^{T}  B_{t} A_{t+1} \cdots A_{T}  \nabla_{\hat{w}^*} f \label{eq:incomplete RMD}
\end{align}
This approach requires storing only the last $K$ iterates $w_t$, and it also saves computation time. Note that $K$-RMD can be combined with checkpointing for further savings, although we do not investigate this.

\vspace{-2mm}
\subsection{General properties}
\vspace{-1mm}





We first establish some intuitions about why using $K$-RMD to optimize $\lambda$ is reasonable. 
While building up an approximate gradient by truncating back-propagation in general optimization problems can lead to large bias, the bilevel optimization problem in~\eqref{eq:bilevel optimization} has some nice structure. 
Here we show that if the lower-level objective $g$ is locally strongly convex around $\hat{w}^*$, then the bias of $h_{T-K}$ can be exponentially small in $K$. That is, choosing a small $K$ would suffice to give a good gradient approximation in finite precision. The proof is given in Appendix~\ref{app:proof of exp convergence}.
\begin{restatable}{prop}{expConvergence}\label{pr:exp convergence}
Assume $g$ is  $\beta$-smooth, twice differentiable, and locally $\alpha$-strongly convex in $w$ around $\{w_{T-K-1}, \dots, w_T\}$.
Let 
$
	\Xi_{t+1}(w_t, \lambda) = w_t - \gamma \nabla_{w} g(w_t, \lambda)
$. 
For $\gamma \leq \frac{1}{\beta}$, 
it holds
\begin{align}
\norm{h_{T-K} - \d_{\lambda}  f }\leq  2^{T-K+1}  (1 -\gamma \alpha)^K  \norm{\nabla_{\hat{w}^*} f}  M_B   \label{eq:nonconvex bound}
\end{align}
where  $M_B = \max_{t\in\{0,\dots,T-K\} } \norm{B_t}$.
In particular, if $g$ is globally $\alpha$-strongly convex, then 
\begin{align} \textstyle
\norm{h_{T-K} - \d_{\lambda}  f } \leq \frac{(1 -\gamma \alpha)^K }{\gamma \alpha}  \norm{\nabla_{\hat{w}^*} f} M_B.    \label{eq:convex bound}
\end{align}
\end{restatable}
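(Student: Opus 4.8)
The plan is to directly compare the two explicit expansions. Subtracting the truncated series \eqref{eq:incomplete RMD} from the full series \eqref{eq:unrolling}, the common partial derivative $\nabla_\lambda f$ and the shared tail $\sum_{t=T-K+1}^{T}$ cancel, leaving exactly the dropped prefix:
\[
h_{T-K} - \d_\lambda f = -\sum_{t=0}^{T-K} B_t\, A_{t+1}\cdots A_T\, \nabla_{\hat{w}^*} f .
\]
By the triangle inequality and submultiplicativity of the operator norm, it then suffices to bound each $\norm{A_{t+1}\cdots A_T}$ and pull out the factors $M_B=\max_{t\in\{0,\dots,T-K\}}\norm{B_t}$ and $\norm{\nabla_{\hat{w}^*}f}$.

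The second step is a per-factor bound on the Jacobians. For the gradient-descent map, $A_j = I - \gamma\,\nabla_w^2 g(w_{j-1},\lambda)$, so everything reduces to bounding the spectrum of $I-\gamma\nabla_w^2 g$ in two regimes. Where $g$ is locally $\alpha$-strongly convex we have $\alpha I \preceq \nabla_w^2 g \preceq \beta I$, and since $\gamma\le 1/\beta$ the matrix $I-\gamma\nabla_w^2 g$ is positive semidefinite with $\norm{A_j}\le 1-\gamma\alpha$; everywhere else $\beta$-smoothness only gives $\norm{\nabla_w^2 g}\le\beta$, hence $\norm{A_j}\le 1+\gamma\beta\le 2$.

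The third step is bookkeeping driven by the local assumption. Strong convexity is assumed at $w_{T-K-1},\dots,w_T$, which are exactly the points evaluated by $A_{T-K},\dots,A_T$; so these last $K+1$ factors are ``contracting'' (norm $\le 1-\gamma\alpha$) while any earlier factor contributes at most $2$. Thus for $t\le T-K-1$ the product $A_{t+1}\cdots A_T$ has $T-K-1-t$ non-contracting factors and $K+1$ contracting ones, giving $\norm{A_{t+1}\cdots A_T}\le 2^{T-K-1-t}(1-\gamma\alpha)^{K+1}$, while the $t=T-K$ term is $\le (1-\gamma\alpha)^K$. Summing the finite geometric series $\sum_{t=0}^{T-K-1}2^{T-K-1-t}=2^{T-K}-1$ and using $(1-\gamma\alpha)\le 1$ collapses the total to $2^{T-K}(1-\gamma\alpha)^K$, which is within the claimed constant $2^{T-K+1}(1-\gamma\alpha)^K$ and yields \eqref{eq:nonconvex bound}.

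For the global case every factor is contracting, so $\norm{A_{t+1}\cdots A_T}\le (1-\gamma\alpha)^{T-t}$; reindexing $j=T-t$ turns the outer sum into $\sum_{j=K}^{T}(1-\gamma\alpha)^j$, which is dominated by the infinite geometric tail $(1-\gamma\alpha)^K/(\gamma\alpha)$ and gives \eqref{eq:convex bound}. The main obstacle I anticipate is the counting in the local case: pinning down which iterates $w_{j-1}$ lie in the strong-convexity neighborhood so that precisely the last $K+1$ factors receive the contraction bound, and verifying that the accumulated powers of $2$ from the non-contracting factors sum to the stated constant rather than blowing up. The spectral-norm estimates and the geometric summation are otherwise routine.
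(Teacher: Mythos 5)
Your proposal is correct and follows essentially the same route as the paper: it identifies the error as the dropped prefix $\sum_{t=0}^{T-K} B_t A_{t+1}\cdots A_T \nabla_{\hat w^*} f$, bounds each Jacobian factor by $1-\gamma\alpha$ where local strong convexity applies and by $1+\gamma\beta\le 2$ elsewhere, and sums the resulting geometric series to obtain both bounds. Your bookkeeping in the local case is in fact marginally tighter (you also exploit the contraction of $A_{T-K}$, landing at $2^{T-K}$ rather than $2^{T-K+1}$), but this is a cosmetic difference, not a different argument.
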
\vspace{-1mm}
Note $0 \leq (1 -\gamma \alpha) < 1$ since $\gamma \leq \frac1\beta \leq \frac1\alpha$. Therefore, Proposition~\ref{pr:exp convergence} says that if $\hat{w}^*$ converges to the \emph{neighborhood} of a strict local minimum of the lower-level optimization, then the bias of using the approximate gradient of $K$-RMD decays exponentially in $K$. 
This exponentially decaying property is the main reason why using $h_{T-K}$ to update the hyperparameter $\lambda$ works.

Next we show that,  when the lower-level problem $g$ is second-order continuously differentiable, $-h_{T-K}$ actually is a sufficient descent direction. This is a much stronger property than the small bias shown in Proposition~\ref{pr:exp convergence}, and it is critical in order to prove convergence to exact stationary points (cf. Theorem~\ref{th:convergence of K-step backprop}).
To build intuition, here we consider a simpler problem where $g$ is globally strongly convex and $\nabla_{\lambda} f  = 0$. These assumptions will be relaxed in the next subsection. 

\begin{restatable}{lem}{boundOfManyTerms} \label{lm:bound of many terms}
  Let $g$ be globally strongly convex and $\nabla_{\lambda} f = 0$.  Assume $g$ is second-order continuously differentiable and $B_t$ has full column rank for all $t$. Let $
  \Xi_{t+1}(w_t, \lambda) = w_t - \gamma \nabla_{w} g(w_t,\lambda)
  $. 
For all $K\geq 1$, with $T$ large enough and $\gamma$ small enough, there exists $c > 0$, s.t.
	$h_{T-K}^\top \d_{\lambda} f \geq  c  \norm{\nabla_{\hat{w}^*} f }^2$. 
	This implies $h_{T-K} $ is a sufficient descent direction, i.e. $	h_{T-K}^\top \d_{\lambda}f   \geq \Omega (\norm{\d_{\lambda} f}^2)$.
\end{restatable}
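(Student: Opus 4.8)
The plan is to pass to the stationary limit of the lower-level dynamics and reduce the inner product to a spectral statement about a fixed pair of matrices. First I would record what the hypotheses give for the one-step Jacobians. Since $g$ is globally $\alpha$-strongly convex and $\beta$-smooth, $A_{t+1} = I - \gamma \nabla_w^2 g(w_t,\lambda)$ is symmetric with spectrum in $[1-\gamma\beta,\,1-\gamma\alpha]\subset[0,1)$ for $\gamma<\tfrac1\beta$, so each $A_{t+1}$ is a contraction, while $B_{t+1} = -\gamma \nabla_\lambda\nabla_w g(w_t,\lambda)$. Gradient descent on a strongly convex objective drives $w_t \to w^*$ geometrically, and by second-order continuous differentiability $A_{t+1}\to A:=I-\gamma H$ and $B_{t+1}\to B:=-\gamma J$, where $H:=\nabla_w^2 g(w^*,\lambda)\succeq\alpha I$ and $J:=\nabla_\lambda\nabla_w g(w^*,\lambda)$. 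Taking $T$ large lets me replace every $A_t,B_t$ in the relevant window by $A,B$ up to an error that is geometrically small in $T$ (the non-stationary early terms in the full sum are suppressed by the contraction factor $(1-\gamma\alpha)^{T-s}$); I would carry this as an additive $o(1)$ term and show at the end it is dominated by the main lower bound.

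Second, with the stationary matrices both quantities acquire clean Neumann-series forms. Writing $v:=\nabla_{\hat{w}^*}f$ and using $\sum_{k\ge 0}A^k=(I-A)^{-1}=(\gamma H)^{-1}$, the full hypergradient is $\d_\lambda f = B\sum_{k=0}^{T}A^k v \to -J H^{-1} v$, while the truncated sum collapses via the finite geometric series to $h_{T-K}=B\sum_{k=0}^{K-1}A^k v = -J\,\phi(H)\,H^{-1}v$, where $\phi(H):=I-(I-\gamma H)^K$. Substituting directly and setting $\eta := H^{-1}v$, the cross term simplifies to a single bilinear form in fixed SPD matrices, $h_{T-K}^\top \d_\lambda f = \eta^\top \phi(H)\, J^\top J\, \eta$, since $\phi(H)$ and $H^{-1}$ commute and $\phi(H)$ is symmetric.

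Third comes the crux: showing $\eta^\top \phi(H)\,J^\top J\,\eta \ge c\,\norm{v}^2$ for some $c>0$. Full column rank of $B_t$ gives $J^\top J \succeq \sigma^2 I \succ 0$ with $\sigma:=\sigma_{\min}(J)>0$, and strong convexity/smoothness give $\alpha I \preceq H \preceq \beta I$, so for small $\gamma$ one has $\phi(H)=K\gamma H + O(\gamma^2)$, which is itself SPD. \emph{This positivity is the main obstacle.} The difficulty is that $\phi(H)$ and $J^\top J$ need not commute, so $\phi(H)J^\top J$ is a product of two SPD matrices whose symmetric part is not automatically positive definite; in particular the naive estimate $\eta^\top\phi(H)J^\top J\eta \ge \lambda_{\min}(\phi(H))\,\eta^\top J^\top J\eta$ is not valid, and the weak bias bound of Proposition~\ref{pr:exp convergence} is useless here because its constant blows up as $\gamma\to 0$. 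I would therefore exploit "$\gamma$ small enough" to reduce the form to its leading part $K\gamma\, v^\top J^\top J\,H^{-1}v$ and then control the interaction between the off-diagonal structure of $J^\top J$ and $H^{-1}$ quantitatively, using $\alpha I\preceq H\preceq\beta I$ together with the full-rank lower bound on $J^\top J$, to extract a uniform $c>0$ (depending on $\alpha,\beta,\sigma,K,\gamma$ but not on $v$). This is the one step where all three hypotheses must be used simultaneously and quantitatively, and I expect it to be the most delicate part of the argument.

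Finally, the sufficient-descent consequence follows cheaply once the quadratic lower bound is in hand. Since $\d_\lambda f = -J H^{-1} v$ is a bounded linear image of $v$, we have $\norm{\d_\lambda f}\le \norm{J}\,\norm{H^{-1}}\,\norm{v}\le (\norm{J}/\alpha)\,\norm{v}$, hence $\norm{v}^2 \ge (\alpha/\norm{J})^2\,\norm{\d_\lambda f}^2$ and therefore $h_{T-K}^\top \d_\lambda f \ge c\,\norm{v}^2 \ge c\,(\alpha/\norm{J})^2\,\norm{\d_\lambda f}^2 = \Omega(\norm{\d_\lambda f}^2)$, as claimed. I would then close by returning to the perturbation bookkeeping from the first step: the deviations of the true $A_t,B_t$ from $A,B$ over the last $K$ iterates are $O(\norm{w_t-w^*})$ and the discarded early terms are geometrically small, so for $T$ large enough both $h_{T-K}$ and $\d_\lambda f$ lie within $o(\norm{v})$ of their stationary forms and the strict positive lower bound $c\,\norm{v}^2$ survives.
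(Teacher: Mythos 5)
Your reduction to the stationary limit is a clean reformulation (it is essentially the content of Proposition~\ref{pr:taylor}), but it stops exactly where the proof has to begin: the positivity of $\eta^\top \phi(H)\,J^\top J\,\eta$ is the entire content of the lemma, and the proposal does not supply an argument for it. Worse, the proposed escape route --- take $\gamma$ small and pass to the leading term $K\gamma\, v^\top J^\top J H^{-1} v = K\gamma\,\langle Jv,\, JH^{-1}v\rangle$ --- does not remove the obstacle you yourself identified. That leading term is still a quadratic form of a product of two non-commuting SPD matrices; the symmetric part of $J^\top J H^{-1}$ can be indefinite whenever the condition numbers of $J$ and $H$ are both large, so the inner product $\langle Jv, JH^{-1}v\rangle$ can be negative for some $v$. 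Shrinking $\gamma$ only rescales the whole expression and cannot change its sign, so ``$\gamma$ small enough'' buys you nothing at this step. No quantitative relation between $\alpha$, $\beta$, and $\sigma_{\min}(J)$ is derived (or assumed) that would force positivity, so the crux is a genuine gap, not merely a delicate detail.

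The paper's proof uses a different mechanism that is worth contrasting. It never forms the resolvent expression; instead it splits $h_{T-K}^\top \d_\lambda f = \norm{h_{T-K}}^2 + h_{T-K}^\top e_K$, where $e_K$ is the truncated tail, and takes $\norm{h_{T-K}}^2 = \norm{B_T \nabla_{\hat w^*} f}^2 \geq \sigma_{\min}(B_T)^2 \norm{\nabla_{\hat w^*}f}^2$ (full column rank) as the positive main term. Each cross term $(B_T\nabla_{\hat w^*}f)^\top B_t A_{t+1}\cdots A_T \nabla_{\hat w^*}f$ is then lower-bounded by replacing the matrix product $A_{t+1}\cdots A_T$ with the nonnegative \emph{scalar} $(1-\gamma\alpha)^{T-t}$ and the early Jacobians $B_t, A_k$ with their values at $w_{T-1}$; scalars commute with $B_T^\top B_T$, which is precisely how the non-commutativity is dispatched, and the substitution errors ($\Delta_1,\Delta_2,\Delta_3$ in Lemma~\ref{lm:bound of single term}) are controlled via the Lipschitz continuity of $A_t, B_t$ (from second-order continuous differentiability of $g$), the linear convergence $\norm{w_t - w^*} \leq De^{-\alpha\gamma t}$ of gradient descent, and $\norm{A_T - (1-\gamma\alpha)I}\leq \gamma(\beta-\alpha)$, with ``$T$ large, $\gamma$ small'' rendering them subdominant. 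If you want to rescue your route, you would need to prove an analogous scalar-comparison (or otherwise exhibit why $\phi(H)J^\top J$ has positive-definite symmetric part under the stated hypotheses); as written, the proposal asserts the conclusion of the hard step rather than proving it.
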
\vspace{-2mm}
The full proof of this non-trivial result is given in Appendix~\ref{app:proof of bound of many terms}. Here we provide some ideas about why it is true. 
First, by Proposition~\ref{pr:exp convergence}, we know the bias decays exponentially. However, this alone is not sufficient to show that $-h_{T-K}$ is a sufficient descent direction. 
To show the desired result, Lemma~\ref{lm:bound of many terms} relies on the assumption that $g$ is second-order continuously differentiable and the fact that using gradient descent to optimize a well-conditioned function has linear convergence~\citep{hazan2016introduction}. These two new structural properties further reduce the bias in Proposition~\ref{pr:exp convergence} and lead to Lemma~\ref{lm:bound of many terms}.
Here the full rank assumption for $B_t$ is made to simplify the proof. We conjecture that this condition can be relaxed when $K>1$. We leave this to future work.

\vspace{-2mm}
\subsection{Convergence}
\vspace{-1mm}

With these insights, we analyze the convergence of bilevel optimization with truncated back-propagation. 
Using Proposition~\ref{pr:exp convergence}, we can immediately deduce that optimizing $\lambda$ with $h_{T-K}$ converges on-average to an $\epsilon$-approximate stationary point. 
Let $\nabla F(\lambda_\tau)$ denote the hypergradient in the $\tau$th iteration.

\begin{restatable}{theorem}{biasedConvergence} \label{th:biased convergence}	
	Suppose $F$ is smooth and bounded below, and suppose there is $\epsilon< \infty$ such that $\norm{h_{T-K} - \d_{\lambda}  f } \leq \epsilon$.
	Using $h_{T-K}$ as a stochastic first-order oracle with a decaying step size $\eta_{\tau} = O(1/\sqrt{\tau}) $  to update $\lambda$ with gradient descent, it follows after $R$ iterations,
	\begin{align*}
	\E\left[ \sum_{\tau=1}^{R}  \frac{ \eta_{\tau} \norm{  \nabla F(\lambda_\tau)  }^2}{ \sum_{\tau=1}^{R} \eta_{\tau}} \right] \leq  \widetilde O \left(  \epsilon +  \frac{\epsilon^2+1}{\sqrt{R}}\right).
	\end{align*}
	That is, under the assumptions in Proposition~\ref{pr:exp convergence}, learning with $h_{T-K}$ converges to an $\epsilon$-approximate stationary point, where $\epsilon = O((1-\gamma\alpha)^{-K})$. 	
\end{restatable}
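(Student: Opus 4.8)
The plan is to treat $h_{T-K}$ as a biased stochastic first-order oracle for $F$ and run the standard descent analysis for non-convex stochastic gradient methods (in the style of classical non-convex SGD analysis \`a la Ghadimi--Lan), carefully tracking how the bias budget $\epsilon$ propagates. The key structural observation is that, for a fixed sampled context, the per-sample exact gradient $\d_\lambda f$ is an \emph{unbiased} estimate of the true hypergradient $\nabla F(\lambda_\tau) = \E_S[\d_\lambda f]$, whereas $h_{T-K}$ differs from $\d_\lambda f$ by at most $\epsilon$ in norm by hypothesis; hence $h_{T-K}$ is an $\epsilon$-biased estimate of $\nabla F(\lambda_\tau)$.

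First I would write $h_{T-K} = \d_\lambda f + b_\tau$ with $\norm{b_\tau}\le\epsilon$, and invoke $L$-smoothness of $F$ to obtain the descent inequality $F(\lambda_{\tau+1}) \le F(\lambda_\tau) - \eta_\tau \lr{\nabla F(\lambda_\tau)}{h_{T-K}} + \frac{L\eta_\tau^2}{2}\norm{h_{T-K}}^2$ from the update $\lambda_{\tau+1} = \lambda_\tau - \eta_\tau h_{T-K}$. Taking the conditional expectation given $\lambda_\tau$ and using unbiasedness of $\d_\lambda f$, the linear term becomes $-\eta_\tau\norm{\nabla F(\lambda_\tau)}^2 - \eta_\tau\lr{\nabla F(\lambda_\tau)}{\bar b_\tau}$, where $\bar b_\tau = \E[b_\tau \mid \lambda_\tau]$ satisfies $\norm{\bar b_\tau}\le\epsilon$; Cauchy--Schwarz then controls the cross term by $\eta_\tau\epsilon\norm{\nabla F(\lambda_\tau)}$.

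Next I would bound the second-moment term $\E[\norm{h_{T-K}}^2\mid\lambda_\tau]$. Splitting $\norm{h_{T-K}}^2 \le 2\norm{\d_\lambda f}^2 + 2\epsilon^2$ and invoking a bounded-variance / bounded second-moment assumption on the oracle $\d_\lambda f$ (implicit in the statement) produces a term of the form $\sigma^2 + \epsilon^2 + \norm{\nabla F(\lambda_\tau)}^2$; this is the source of the $\epsilon^2 + 1$ in the final rate. Rearranging, telescoping over $\tau = 1,\dots,R$, and dividing by $\sum_\tau \eta_\tau$ yields a bound on the step-size-weighted average of $\E\norm{\nabla F(\lambda_\tau)}^2$ in terms of $F(\lambda_1) - \inf F$, the bias cross term $\epsilon \frac{\sum_\tau\eta_\tau\norm{\nabla F(\lambda_\tau)}}{\sum_\tau\eta_\tau}$, and $\frac{\sum_\tau\eta_\tau^2}{\sum_\tau\eta_\tau}$.

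The main obstacle is extracting the advertised $\epsilon$ dependence from the bias cross term, since $\norm{\nabla F(\lambda_\tau)}$ there appears to the first power while the quantity being bounded is the average of $\norm{\nabla F(\lambda_\tau)}^2$. I would handle this by applying Cauchy--Schwarz to $\sum_\tau\eta_\tau\norm{\nabla F}$ and letting $A$ denote the weighted average of $\norm{\nabla F}^2$; this converts the estimate into $A \le B + \epsilon\sqrt{A}$ with $B = \widetilde O((\epsilon^2+1)/\sqrt R)$, a quadratic inequality in $\sqrt A$ whose solution gives $A \le O(\epsilon^2 + B)$, hence the stated $\widetilde O(\epsilon + (\epsilon^2+1)/\sqrt R)$ (using $\epsilon^2 \le \epsilon$ in the regime $\epsilon\le 1$ relevant to the corollary). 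Finally, substituting $\eta_\tau = O(1/\sqrt\tau)$ gives $\sum_{\tau=1}^R\eta_\tau = \Theta(\sqrt R)$ and $\sum_{\tau=1}^R\eta_\tau^2 = \Theta(\log R)$, producing the $1/\sqrt R$ decay together with the logarithmic factor absorbed into $\widetilde O$; identifying $\epsilon = O((1-\gamma\alpha)^K)$ via Proposition~\ref{pr:exp convergence} then gives the concluding $\epsilon$-stationarity claim.
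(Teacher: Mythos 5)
Your proposal is correct and follows essentially the same route as the paper's proof: the descent lemma for $L$-smooth $F$, conditional expectation to exploit the unbiasedness of $\d_\lambda f$, a telescoping sum, and the step-size sums $\sum_\tau \eta_\tau = \Theta(\sqrt R)$, $\sum_\tau \eta_\tau^2 = \Theta(\log R)$. The only real divergence is in the bias cross term: the paper bounds $\lr{\nabla F(\lambda_\tau)}{e_\tau} \leq G\,\norm{e_\tau} \leq G\epsilon$ using an (implicitly introduced) almost-sure bound $\norm{\d_\lambda f}\leq G$, which yields the $O(\epsilon)$ term directly and avoids your quadratic inequality $A \leq B + \epsilon\sqrt{A}$; your variant is equally valid and needs only a second-moment bound on the oracle, at the cost of an $\epsilon^2$ leading term that matches the stated $O(\epsilon)$ only in the regime $\epsilon \leq 1$, as you note.
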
\vspace{-1mm}
We see that the bias becomes small as $K$ increases. As a result, it is sufficient to perform $K$-step truncated back-propagation with $K = O(\log 1/\epsilon)$ to update $\lambda$.

Next, using Lemma~\ref{lm:bound of many terms}, we show that the bias term in Theorem~\ref{th:biased convergence} can be removed if the problem is more structured. As promised, we relax the simplifications made 
in Lemma~\ref{lm:bound of many terms} into assumptions 2 and 3 below and only assume $g$ is locally strongly convex.
\begin{restatable}{theorem}{convKStep} \label{th:convergence of K-step backprop}
	Under the assumptions in Proposition~\ref{pr:exp convergence} and Theorem~\ref{th:biased convergence}, if in addition
	\begin{enumerate}[topsep=0pt,itemsep=-1pt]
		\item  $g$ is second-order continuously differentiable
    	\item  $B_t$ has full column rank around $w_T$
		\item  $ \nabla_\lambda f^\top (\d_\lambda f + h_{T-K} -\nabla_\lambda f )  
		\geq \Omega(\norm{\nabla_\lambda f}^2)$
		\item  the problem is deterministic (i.e. $F = f$)
	\end{enumerate}
	then for all $K\geq 1$, with $T$ large enough and $\gamma$ small enough,
	 the limit point is an exact stationary point, i.e. $
	\lim_{\tau\to\infty} \norm{ \nabla F(\lambda_\tau)  }= 0
	$.
\end{restatable}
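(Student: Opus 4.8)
The plan is to upgrade the sufficient-descent property of Lemma~\ref{lm:bound of many terms} to the present, more general setting, and then feed it into a standard deterministic descent argument. Throughout, abbreviate the \emph{indirect} gradient term by $u := \sum_{t=0}^{T} B_t A_{t+1}\cdots A_T \nabla_{\hat{w}^*} f$ and its truncation by $\tilde u := \sum_{t=T-K+1}^{T} B_t A_{t+1}\cdots A_T \nabla_{\hat{w}^*} f$, so that $\d_\lambda f = \nabla_\lambda f + u$ and $h_{T-K} = \nabla_\lambda f + \tilde u$ by~\eqref{eq:unrolling} and~\eqref{eq:incomplete RMD}. Expanding the inner product and regrouping the direct term gives $h_{T-K}^\top \d_\lambda f = \nabla_\lambda f^\top(\d_\lambda f + \tilde u) + \tilde u^\top u = \nabla_\lambda f^\top(\d_\lambda f + h_{T-K} - \nabla_\lambda f) + \tilde u^\top u$. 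Assumption~3 bounds the first summand below by $\Omega(\norm{\nabla_\lambda f}^2)$, so it remains to control the purely indirect cross term $\tilde u^\top u$.

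The core step is to show $\tilde u^\top u \geq \Omega(\norm{u}^2)$ by localizing Lemma~\ref{lm:bound of many terms}. In the special case $\nabla_\lambda f = 0$ treated there, $u = \d_\lambda f$ and $\tilde u = h_{T-K}$, and the lemma yields exactly $\tilde u^\top u \ge c\norm{\nabla_{\hat{w}^*}f}^2 \ge \Omega(\norm{u}^2)$ under \emph{global} strong convexity and full column rank of $B_t$. To pass to the local hypotheses here, I would use that gradient descent on a function that is locally $\alpha$-strongly convex around its minimizer enters and remains in that neighborhood after finitely many steps; hence for $T$ large enough every iterate touched by the sum (indeed all sufficiently late iterates) lies in the strongly convex region, where $A_t = I - \gamma\nabla^2_w g(w_t,\lambda)$ is a contraction with $\norm{A_t}\le 1-\gamma\alpha$. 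Second-order continuous differentiability (assumption~1) then lets me replace each $A_t$ by its limit $A_\infty := I - \gamma\nabla^2_w g(\hat{w}^*,\lambda)$ up to error vanishing as $T\to\infty$ and $\gamma\to 0$, reducing the cross term to the constant-matrix quadratic form analyzed in Lemma~\ref{lm:bound of many terms}; full column rank of $B_t$ near $w_T$ (assumption~2) keeps this form nondegenerate. Combining with the previous paragraph and $\norm{a+b}^2 \le 2\norm{a}^2 + 2\norm{b}^2$ yields the sufficient-descent inequality for the true hypergradient, $h_{T-K}^\top \d_\lambda f \ge \Omega(\norm{\nabla_\lambda f}^2) + \Omega(\norm{u}^2) \ge \Omega(\norm{\d_\lambda f}^2)$.

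Finally I would conclude convergence. By assumption~4 the problem is deterministic, so $\nabla F(\lambda_\tau) = \d_\lambda f(\lambda_\tau)$ and $h_{T-K}$ is a deterministic descent direction satisfying both $h_{T-K}^\top \nabla F \ge \Omega(\norm{\nabla F}^2)$ and $\norm{h_{T-K}} = O(\norm{\nabla F})$, the latter following from the same geometric-series bound on $A_{t+1}\cdots A_T$ together with the nondegeneracy of $B_t$ (which makes $\norm{\nabla_{\hat{w}^*}f}$, $\norm{u}$, and $\norm{\d_\lambda f}$ mutually comparable). Applying the descent lemma to the smooth $F$ along $\lambda_{\tau+1} = \lambda_\tau - \eta_\tau h_{T-K}$, the quadratic error term is dominated by the linear decrease once $\eta_\tau$ is small, i.e.\ for all large $\tau$ since $\eta_\tau = O(1/\sqrt{\tau})\to 0$; this gives $F(\lambda_{\tau+1}) \le F(\lambda_\tau) - \Omega(\eta_\tau \norm{\nabla F(\lambda_\tau)}^2)$. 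Summing and using that $F$ is bounded below yields $\sum_\tau \eta_\tau \norm{\nabla F(\lambda_\tau)}^2 < \infty$; since $\sum_\tau \eta_\tau = \infty$ and $\norm{\nabla F}$ varies slowly along the trajectory (because $\norm{\lambda_{\tau+1}-\lambda_\tau} = \eta_\tau\norm{h_{T-K}}\to 0$ and $\nabla F$ is Lipschitz), the standard $\liminf$-to-$\lim$ argument upgrades $\liminf_\tau \norm{\nabla F(\lambda_\tau)} = 0$ to $\lim_\tau \norm{\nabla F(\lambda_\tau)} = 0$.

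The main obstacle is the middle step: rigorously localizing Lemma~\ref{lm:bound of many terms}, whose proof is carried out under global strong convexity. Transferring it requires quantifying how large $T$ must be for the iterates to be trapped in the strongly convex neighborhood, and verifying that the contributions of the finitely many pre-convergence iterates are damped by the subsequent contraction factors, so that the $\Omega(\norm{u}^2)$ lower bound survives with a constant uniform enough to combine cleanly with assumption~3.
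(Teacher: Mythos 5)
Your proposal is correct and follows essentially the same route as the paper: the identical decomposition $h_{T-K}^\top \d_\lambda f = \nabla_\lambda f^\top(\d_\lambda f + h_{T-K} - \nabla_\lambda f) + \tilde u^\top u$, assumption~3 for the first term, a localized Lemma~\ref{lm:bound of many terms} for the cross term (the paper handles the localization exactly as you anticipate, by splitting $u$ into the last $H$ iterates, which lie in the strongly convex region, plus an early remainder damped to $O(e^{-\alpha\gamma H}\norm{\nabla_{\hat{w}^*} f}^2)$ via Proposition~\ref{pr:exp convergence}), and a standard descent-lemma telescoping to conclude. The only cosmetic difference is at the end: the paper drives the surrogate $\norm{\nabla_\lambda f}^2 + \norm{\nabla_{\hat{w}^*} f}^2$ to zero with a constant step size rather than proving $\norm{h_{T-K}} = O(\norm{\nabla F})$ and doing a $\liminf$-to-$\lim$ upgrade, but your version also works since that bound follows from your two inequalities by Cauchy--Schwarz.
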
\vspace{-1mm}

Theorem~\ref{th:convergence of K-step backprop} shows that if the partial derivative $\nabla_\lambda f$ does not interfere strongly with the partial derivative computed through back-propagating the lower-level optimization procedure (assumption 3), then optimizing $\lambda$ with $h_{T-K}$ converges to an \emph{exact} stationary point. 
This is a very strong result for an interesting special case. It shows that even with one-step back-propagation $h_{T-1}$, updating $\lambda$ can converge to a stationary point.

This non-interference assumption unfortunately is necessary; otherwise, truncating the full RMD leads to constant bias, as we show below (proved in Appendix~\ref{app:proof of counterexample}).

\begin{restatable}{theorem}{counterExample} \label{th:counterexample}
	There is a problem, satisfying all but assumption 3 in Theorem~\ref{th:convergence of K-step backprop}, such that optimizing $\lambda$ with $h_{T-K}$ does not converge to a stationary point.
\end{restatable}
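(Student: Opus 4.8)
The plan is to exhibit an explicit, fully worked scalar quadratic instance of~\eqref{eq:bilevel optimization} for which assumptions 1, 2, and 4 of Theorem~\ref{th:convergence of K-step backprop} (together with the hypotheses of Proposition~\ref{pr:exp convergence} and Theorem~\ref{th:biased convergence}) all hold, yet the truncated update drives $\lambda$ to a point that is \emph{not} stationary for $F$. Concretely, I would take $w,\lambda\in\R$, a strongly convex lower-level objective $g(w,\lambda)=\frac12(w-\lambda)^2$ so that $\hat{w}^*(\lambda)=\lambda$, and an upper-level objective $f(w,\lambda)=\frac a2 w^2 + rw + \frac d2\lambda^2 + q\lambda$ with $a+d>0$ (so $F=f$ is smooth, strongly convex, and bounded below). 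The gradient-descent map $\Xi_{t+1}(w_t,\lambda)=w_t-\gamma(w_t-\lambda)$ yields the constant Jacobians $A_t=1-\gamma$ and $B_t=\gamma$; in particular $g$ is $1$-smooth and $1$-strongly convex, second-order continuously differentiable, $B_t\neq0$ has full column rank, and the instance is deterministic, so every assumption except possibly assumption 3 is in force.

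The crucial design choice is the nonzero linear terms $rw$ and $q\lambda$, which create exactly the interference between $\nabla_\lambda f$ and the back-propagated correction that assumption 3 is meant to rule out. Next I would evaluate both gradients in closed form using the geometric-series structure of the quadratic. Summing the unrolled recursion~\eqref{eq:unrolling} and letting $T\to\infty$ (so $\hat w^*=w_T\to\lambda$, the regime ``$T$ large enough'' of Theorem~\ref{th:convergence of K-step backprop}) gives the exact hypergradient $\d_\lambda f=(d+a)\lambda+(q+r)$ with unique stationary point $\lambda^*=-(q+r)/(d+a)$, while the $K$-truncated sum~\eqref{eq:incomplete RMD} collapses, via $\sum_{s=0}^{K-1}(1-\gamma)^s=\frac{1-(1-\gamma)^K}{\gamma}$, to $h_{T-K}=(d+(1-\rho)a)\lambda+(q+(1-\rho)r)$ where $\rho:=(1-\gamma)^K\in(0,1)$. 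The root of this affine field is $\lambda'=-\,(q+(1-\rho)r)/(d+(1-\rho)a)$.

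I would then fix parameters, e.g. $a=d=1$, $q=0$, $r=1$, $\gamma=\tfrac12$, $K=1$, giving $\lambda^*=-\tfrac12$ but $\lambda'=-\tfrac13\neq\lambda^*$, and check that the slope $s:=d+(1-\rho)a=\tfrac32>0$ and the gap $|\lambda'-\lambda^*|$ stay bounded away from $0$ for all large $T$ and all small $\gamma$, so the counterexample lives in precisely the regime of Theorem~\ref{th:convergence of K-step backprop}. Since $h_{T-K}(\lambda)=s(\lambda-\lambda')$ is an affine field with positive slope, the deterministic update $\lambda_{\tau+1}=\lambda_\tau-\eta_\tau h_{T-K}(\lambda_\tau)$ with $\eta_\tau=O(1/\sqrt\tau)$ satisfies $\lambda_{\tau+1}-\lambda'=(1-\eta_\tau s)(\lambda_\tau-\lambda')$, and because $\sum_\tau\eta_\tau=\infty$ while $\eta_\tau\to0$, the product $\prod_\tau(1-\eta_\tau s)\to0$, forcing $\lambda_\tau\to\lambda'$. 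But $\nabla F(\lambda')=\d_\lambda f(\lambda')=(d+a)(\lambda'-\lambda^*)\neq0$, so the iterates converge to a non-stationary point, which is the claim.

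Finally I would confirm that assumption 3 is the sole failed hypothesis by evaluating its left-hand side at the limit point: since $h_{T-K}(\lambda')=0$ forces $\nabla_\lambda f(\lambda')=-(1-\rho)(a\lambda'+r)$ while $\d_\lambda f-\nabla_\lambda f=a\lambda'+r$, we get $\nabla_\lambda f^\top(\d_\lambda f+h_{T-K}-\nabla_\lambda f)=-(1-\rho)(a\lambda'+r)^2<0$, which cannot dominate $\Omega(\norm{\nabla_\lambda f}^2)>0$. I expect the main obstacle to be bookkeeping rather than conceptual: one must verify that the truncation bias, hence the displacement $\lambda'-\lambda^*$, does not vanish in the joint limit ``$T$ large, $\gamma$ small'' used by Theorem~\ref{th:convergence of K-step backprop} (here $\lambda'\to-q/d$ and $\lambda^*=-(q+r)/(d+a)$ as $\gamma\to0$, whose difference stays nonzero), while simultaneously keeping $g$ strongly convex, $F$ bounded below, and the truncated field contractive so that the outer loop provably converges to the wrong fixed point rather than diverging.
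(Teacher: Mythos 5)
Your proposal is correct, and the counterexample you construct is essentially the same instance the paper uses: a scalar problem with lower-level objective $\tfrac12(w-\lambda)^2$ and an upper-level objective coupling a term in $\hat w^*$ with a quadratic perturbation in $\lambda$ (the paper's $\phi(\lambda)=\tfrac12(\lambda-\lambda_0)^2$ is your $\tfrac d2\lambda^2+q\lambda$; your extra linear term $rw$ only adds generality, and your degeneracy condition $qa=rd$ for the two roots to coincide reduces, in the paper's parametrization $a=1,r=0$, to exactly the paper's requirement that the stationary points of $\phi$ avoid $P_0=\{0\}$). Where you genuinely diverge is in how non-convergence is established. The paper argues in two abstract steps: it shows $h_{T-K}\neq 0$ at every stationary point of $f$, then derives a contradiction from assuming the iterates converge to one — an argument whose contradiction step uses a \emph{constant} outer step size, so that the per-iteration displacement stays above a fixed $\epsilon$. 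You instead exploit the fact that for this quadratic instance both $\d_\lambda f$ and $h_{T-K}$ are affine in $\lambda$, identify the unique root $\lambda'$ of the truncated field, show the outer iteration with the decaying step size of Theorem~\ref{th:biased convergence} contracts to $\lambda'$ (via $\prod_\tau(1-\eta_\tau s)\to 0$), and verify directly that $\nabla F(\lambda')\neq 0$ and that assumption 3 fails there with a negative inner product. This buys two things: consistency with the step-size schedule actually assumed in Theorems~\ref{th:biased convergence} and~\ref{th:convergence of K-step backprop} (under which the paper's ``displacement bounded below'' contradiction would need modification), and an explicit formula for the asymptotic bias $\lambda'-\lambda^*$ showing it survives the joint limit $T\to\infty$, $\gamma\to 0$ with $K$ fixed. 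The paper's version is marginally more general in that its Step 2 applies to any problem whose truncated gradient is continuous and non-vanishing on the stationary set, not only affine ones; your remaining work is, as you say, bookkeeping — replacing the $T\to\infty$ coefficients by the finite-$T$ ones $m_T=1-(1-\gamma)^T$ and checking by continuity that the two roots stay separated.
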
\vspace{-2mm}

Note however that the non-interference assumption is satisfied when $\nabla_{\lambda} f = 0$, i.e. when the upper-level problem does not directly depend on the hyperparameter.
This is the case for many practical applications: e.g. hyperparameter optimization, meta-learning regularization models, image desnosing \citep{roth2005fields, chen2014insights}, data hyper-cleaning \citep{franceschi2017forward}, and task interaction~\citep{evgeniou2005learning}.

\vspace{-1mm}
\subsection{Relationship with implicit differentiation}
\vspace{-1mm}
The  gradient estimate $h_{T-K}$ is related to implicit differentiation, which is a classical first-order approach to solving bilevel optimization problems~\citep{larsen1996design,bengio2000gradient}. 
Assume $g$ is second-order continuously differentiable and that its optimal solution uniquely exists such that $w^* = w^*(\lambda)$. By the implicit function theorem~\citep{rudin1964principles}, the total derivative of $f$ with respect to $\lambda$ can be written as 
\begin{align}
  \d_{\lambda} f = \nabla_{\lambda} f   - \nabla_{\lambda, w} g \nabla_{w, w}^{-1} g  \nabla_{\hat{w}^*} f \label{eq:implicit differentiation}
\end{align}
where all derivatives are evaluated at $(w^*(\lambda), \lambda)$ and $\nabla_{\lambda, w} g  = \nabla_{\lambda} (\nabla_{w} g)\in \R^{N\times M}$. 

Here we show that, in the limit where $\hat{w}^*$ converges to $w^*$,  $h_{T-K}$ can be viewed as approximating the matrix inverse in~\eqref{eq:implicit differentiation} with an order-$K$ Taylor series. This can be seen from the next proposition.
\begin{restatable}{prop}{matrixTaylor} \label{pr:taylor}
Under the assumptions in Proposition~\ref{pr:exp convergence}, suppose $w_t$ converges to a stationary point $w^*$. Let  $ A_{\infty} = \lim_{t\to\infty} A_t $ and $ B_{\infty} = \lim_{t\to\infty} B_t $. For $\gamma < \frac{1}{\beta}$, it satisfies that 
\begin{align} \textstyle
- \nabla_{\lambda, w} g \nabla_{w, w}^{-1} g   = B_\infty  \sum_{k=0}^{\infty} A_{\infty}^k  \label{eq:infinte series}
\end{align}
\end{restatable}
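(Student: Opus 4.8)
The plan is to compute $A_\infty$ and $B_\infty$ explicitly from the gradient-descent transition, recognize the infinite sum $\sum_{k=0}^{\infty} A_\infty^k$ as a Neumann series, evaluate it in closed form as a matrix inverse, and verify that multiplying by $B_\infty$ reproduces the implicit-differentiation formula~\eqref{eq:implicit differentiation}.

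First I would substitute $\Xi_{t+1}(w_t,\lambda) = w_t - \gamma \nabla_{w} g(w_t,\lambda)$ into the definitions $A_{t+1} = \nabla_{w_t}\Xi_{t+1}$ and $B_{t+1} = \nabla_{\lambda}\Xi_{t+1}$, obtaining $A_{t+1} = I - \gamma \nabla_{w,w} g(w_t,\lambda)$ and $B_{t+1} = -\gamma \nabla_{\lambda,w} g(w_t,\lambda)$. Taking $t \to \infty$, using $w_t \to w^*$ together with continuity of the second derivatives of $g$, gives $A_\infty = I - \gamma \nabla_{w,w} g$ and $B_\infty = -\gamma \nabla_{\lambda,w} g$, with all Hessians evaluated at $(w^*,\lambda)$. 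Note the dimensions are consistent: $B_\infty \in \R^{N\times M}$ and $\sum_k A_\infty^k \in \R^{M\times M}$, so the product lands in $\R^{N\times M}$, matching the left-hand side.

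The core step is to evaluate $\sum_{k=0}^{\infty} A_\infty^k$. Writing $H := \nabla_{w,w} g(w^*,\lambda)$, I would argue that $H$ is symmetric with eigenvalues in $[\alpha,\beta]$, since $g$ is $\beta$-smooth and locally $\alpha$-strongly convex around $w^*$, i.e. $\alpha I \preceq H \preceq \beta I$. Hence the eigenvalues of $A_\infty = I - \gamma H$ lie in $[\,1-\gamma\beta,\ 1-\gamma\alpha\,] \subset (0,1)$, because $\gamma < \tfrac1\beta \le \tfrac1\alpha$. In particular the spectral radius of $A_\infty$ is strictly less than $1$, so the Neumann series converges and $\sum_{k=0}^{\infty} A_\infty^k = (I - A_\infty)^{-1} = (\gamma H)^{-1} = \tfrac{1}{\gamma}\nabla_{w,w}^{-1} g$. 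Multiplying on the left by $B_\infty = -\gamma \nabla_{\lambda,w} g$ cancels the factors of $\gamma$ and yields $-\nabla_{\lambda,w} g\, \nabla_{w,w}^{-1} g$, which is exactly the right-hand side of~\eqref{eq:implicit differentiation}.

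The main obstacle is rigorously justifying the Neumann-series identity, namely that the series converges and equals $(I-A_\infty)^{-1}$. This hinges entirely on the spectral-radius bound, so the real work lies in establishing $\alpha I \preceq H \preceq \beta I$ at the limit point (using symmetry and positive-definiteness of the Hessian, so its eigenvalues are real and confined to $[\alpha,\beta]$) and in identifying $A_\infty$ and $B_\infty$ with the derivatives of $g$ at $w^*$, which requires continuity of $\nabla_{w,w} g$ and $\nabla_{\lambda,w} g$ rather than mere twice-differentiability. Everything else reduces to the standard geometric-series computation.
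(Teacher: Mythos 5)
Your proposal is correct and follows essentially the same route as the paper's proof: identify $A_\infty = I - \gamma\nabla_{w,w}g$ and $B_\infty = -\gamma\nabla_{\lambda,w}g$ at $(w^*,\lambda)$, invoke the Neumann series $(I-A_\infty)^{-1} = \sum_k A_\infty^k$ via the spectral bound $\gamma\alpha I \preceq \gamma\nabla_{w,w}g \preceq I$, and cancel the factors of $\gamma$. Your explicit remarks on eigenvalue symmetry, dimension consistency, and the continuity of the second derivatives needed to pass to the limit are slightly more careful than the paper's but do not change the argument.
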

By Proposition~\ref{pr:taylor}, we can write $\d_{\lambda}f $ in \eqref{eq:implicit differentiation} as
\begin{align*} \textstyle
\d_{\lambda}f
&= \nabla_{\lambda} f - \nabla_{\lambda, w} g \nabla_{w, w}^{-1} g \nabla_{\hat{w}^*} f \\
&=  h_{T-K} +  B_\infty  \sum_{k=K}^{\infty} A_{\infty}^k  \nabla_{\hat{w}^*} f
\end{align*}
That is, $h_{T-K}$ captures the first $K$ terms in the Taylor series, and the residue term has an upper bound as in Proposition~\ref{pr:exp convergence}.

Given this connection, we can compare the use of $h_{T-K}$ and approximating~\eqref{eq:implicit differentiation} using $K$ steps of conjugate gradient descent for high-dimensional problems~\citep{pedregosa2016hyperparameter}. First, both approaches require local strong-convexity to ensure a good approximation. Specifically, let $\kappa = \frac{\beta}{\alpha} > 0$ locally around the limit.
Using $h_{T-K}$ has a bias in $O((1-\frac1\kappa)^K)$, whereas using \eqref{eq:implicit differentiation} and inverting the matrix with $K$ iterations of conjugate gradient has a bias in $O((1-\frac{1}{\sqrt{\kappa}})^{K})$~\citep{shewchuk1994introduction}. Therefore, when $w^*$ is available, solving~\eqref{eq:implicit differentiation} with conjugate gradient descent is preferable. However, in practice, this is hardly true. When an approximate solution $\hat{w}^*$ to the lower-level problem is used, adopting~\eqref{eq:implicit differentiation} has no control on the approximate error, nor does it necessarily yield a descent direction. On the contrary, $h_{T-K}$ is based on Proposition~\ref{pr:exp convergence}, which uses a weaker assumption and does not require the convergence of $w_t$ to a stationary point.
Truncated back-propagation can also optimize the hyperparameters that control the lower-level optimization process, which the implicit differentiation approach cannot do.

\vspace{-2mm}
\section{EXPERIMENTS}
\vspace{-1mm}
\subsection{Toy problem}
\label{sec:toy}

Consider the following simple problem for $\lambda, w \in \R^2$:
\begin{align*}
  & \min_\lambda \norm{\hat w^*}^2 + 10\norm{\sin(\hat w^*)}^2 =: f(\hat w^*, \lambda) \\
  \text{s.t. } & \hat w^* \approx \argmin_w \textstyle\frac12 (w-\lambda)^\top G (w-\lambda) =: g(w, \lambda)
\end{align*}
where $\norm{\cdot}$ is the $\ell_2$ norm, sine is applied elementwise, $G = \diag(1, \frac12)$, and 
we define $\hat w^*$ as the result of $T=100$ steps of gradient descent on $g$ with learning rate $\gamma=0.1$, initialized at $w_0 = (2,2)$.
A plot of $f(\cdot, \lambda)$ is shown in Figure.~\ref{fig:toy1}.
We will use this problem to visualize the theorems and explore the empirical properties of truncated back-propagation.

This deterministic problem satisfies all of the assumptions in the previous section, particularly those of Theorem~\ref{th:convergence of K-step backprop}: $g$ is $1$-smooth and $\frac12$-strongly convex, with
\[
B_{t+1} = \nabla_\lambda [w_t - \gamma \nabla_w g(w_t, \lambda) ] = \gamma G
\]
and $B_0 = 0$. Although $f$ is somewhat complicated, with many saddle points, it satisfies the non-interference assumption because $\nabla_\lambda f = 0$.

\begin{figure}[t]
  \includegraphics[width=0.23\textwidth]{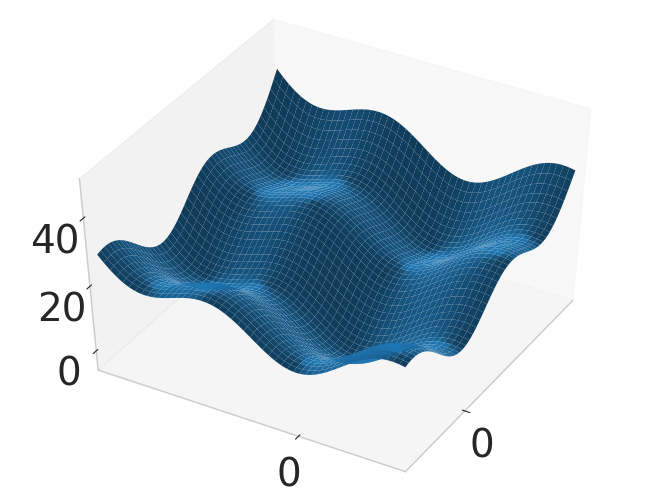}
  \includegraphics[width=0.23\textwidth]{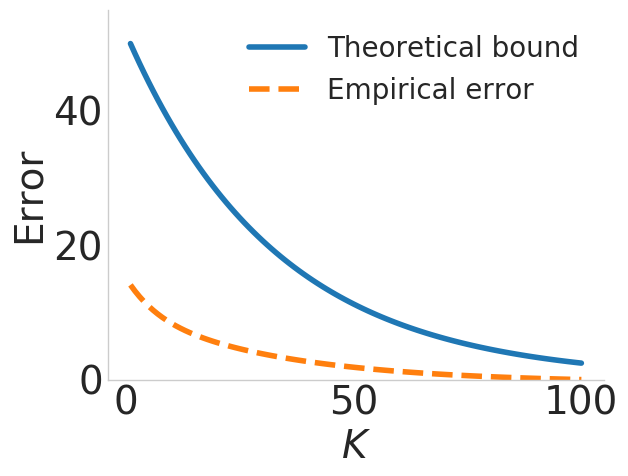}
  \caption{Graph of $f$ and visualization of Prop. \ref{pr:exp convergence}.}
  \label{fig:toy1}
  \vspace{-3mm}
\end{figure}
\begin{figure}
	\includegraphics[width=0.23\textwidth]{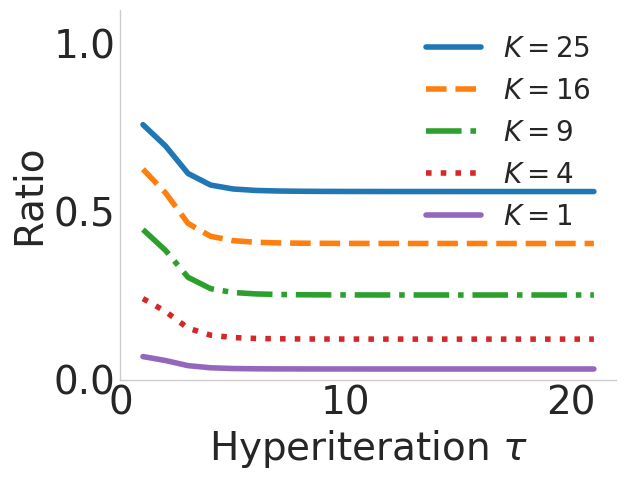}
	\includegraphics[width=0.23\textwidth]{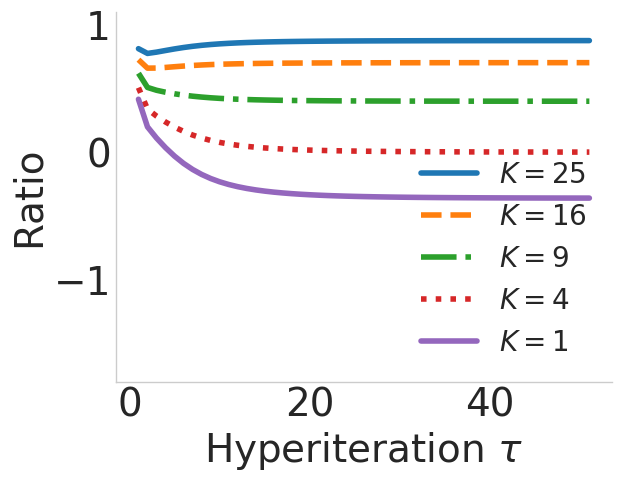}
	\caption{The ratio $h_{T-K}^\top \d_\lambda f / \norm{\d_\lambda f}^2$ at various $\lambda_\tau$, for $f$ and $\widetilde f$ respectively.}
	\label{fig:toy2}
	\vspace{-3mm}
\end{figure}
Figure \ref{fig:toy1} visualizes Proposition~\ref{pr:exp convergence} by plotting the approximation error $\norm{h_{T-K} - \d_\lambda f}$ and the theoretical bound $\frac{(1-\gamma\alpha)^K}{\gamma\alpha}\norm{\nabla_{\hat w^*}f}M_B$ at $\lambda=(1,1)$. 
For this problem, $\alpha = \frac12$, $M_B = \norm{\gamma G} = \gamma$, and $\nabla_{\hat w^*} f$ can be found analytically from $\hat w^* = Cw_0 + (I-C)\lambda$, where $C = (I-\gamma G)^T$.
Figure \ref{fig:toy3} (left) plots the iterates $\lambda_\tau$ when optimizing $f$ using $1$-RMD and a decaying meta-learning rate $\eta_\tau = \frac{\eta_0}{\sqrt \tau}$.\footnote{
Because $\norm{h_{T-K}}$ varies widely with $K$, we tune $\eta_0$ to ensure that the first update $\eta_1 h_{T-K}(\lambda_1)$ has norm $0.6$.}
In comparison with the true gradient $\d_\lambda f$ at these points, we see that $h_{T-1}$ is indeed a descent direction.
Figure \ref{fig:toy2} (left) visualizes this in a different way, by plotting $h_{T-K}^\top \d_\lambda f / \norm{\d_\lambda f}^2$ for various $K$ at each point $\lambda_\tau$ along the $K=1$ trajectory.
By Lemma~\ref{lm:bound of many terms}, this ratio stays well away from zero.

To demonstrate the biased convergence of Theorem~\ref{th:biased convergence}, we break assumption 3 of
Theorem~\ref{th:convergence of K-step backprop}
by changing the upper objective to $\widetilde f(\hat w^*, \lambda) := f(\hat w^*, \lambda) + 5\norm{\lambda - (1,0)}^2$
so that $\nabla_\lambda \widetilde f \neq 0$.
The guarantee of Lemma \ref{lm:bound of many terms} no longer applies, and we see in Figure \ref{fig:toy2} (right) that $h_{T-K}^\top \d_\lambda f / \norm{\d_\lambda f}^2$ can become negative.
Indeed, Figure \ref{fig:toy4} shows that optimizing $\widetilde f$ with $h_{T-1}$ converges to a suboptimal point.
However, it also shows that using larger $K$ rapidly decreases the bias.

For the original objective $f$,
Theorem~\ref{th:convergence of K-step backprop}
guarantees exact convergence.
Figure \ref{fig:toy3} shows optimization trajectories for various $K$, and a log-scale plot of their convergence rates.
Note that, because the lower-level problem cannot be perfectly solved within $T$ steps, the optimal $\lambda$ is offset from the origin.
Truncated back-propagation can handle this, but it breaks the assumptions required by the implicit differentiation approach to bilevel optimization.

\begin{figure}
  \includegraphics[width=0.23\textwidth]{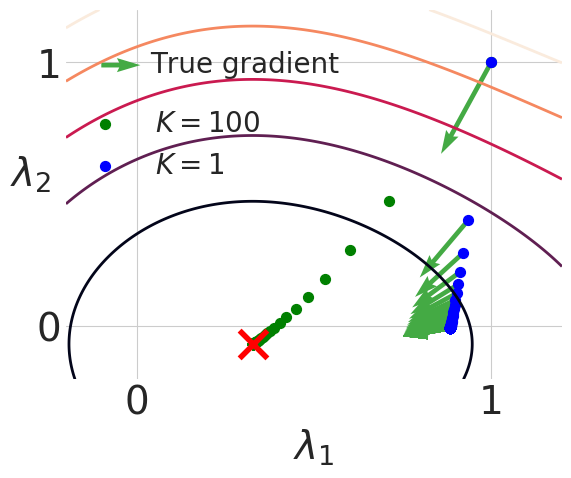}
  \includegraphics[width=0.23\textwidth]{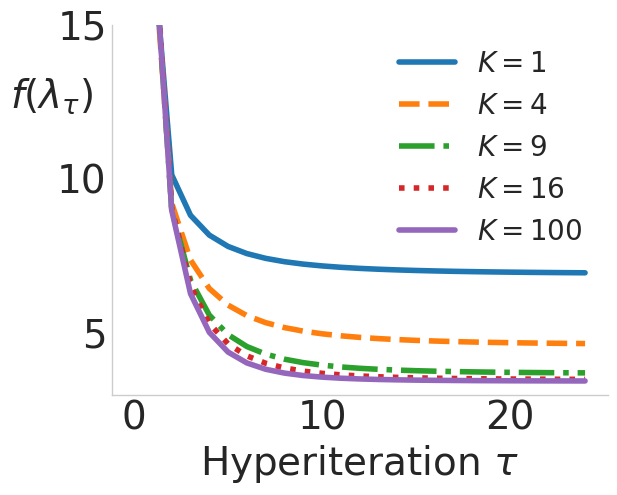}
  \caption{Biased convergence for $\widetilde f$. The red X marks the optimal $\lambda$.}
  \label{fig:toy4}
  \vspace{-3mm}
\end{figure}
\begin{figure}
	\includegraphics[width=0.23\textwidth]{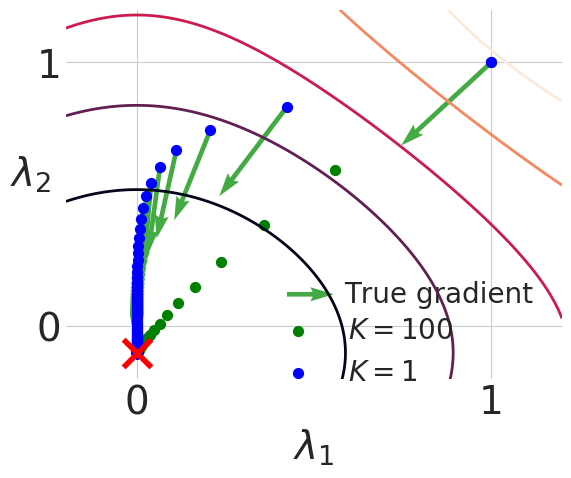}
	\includegraphics[width=0.23\textwidth]{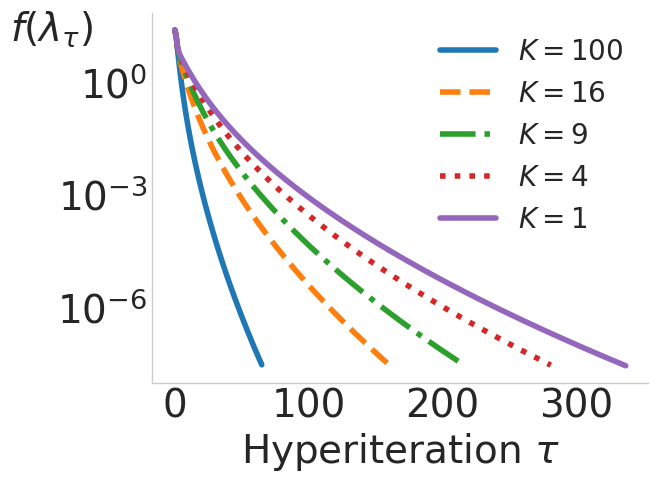}
	\caption{Convergence for $f$.}
	\label{fig:toy3}
  \vspace{-3mm}
\end{figure}

\vspace{-2mm}
\subsection{Hyperparameter optimization problems} \label{sec:hyperparameter optimization}
\vspace{-1mm}
\subsubsection{Data hypercleaning}
\label{sec:mnist}
  \vspace{-1mm}
In this section, we evaluate $K$-RMD on a hyperparameter optimization problem.
The goal of data hypercleaning \citep{franceschi2017forward} is to train a linear classifier for MNIST~\cite{lecun1998gradient}, with the complication that half of our training labels have been corrupted.
To do this with hyperparameter optimization, let $W \in \R^{10\times 785}$ be the weights of the classifier, with the outer objective $f$ measuring the cross-entropy loss on a cleanly labeled validation set.
The inner objective is defined as \emph{weighted} cross-entropy training loss plus regularization:
\[\textstyle
g(W, \lambda) = \sum_{i=1}^{5000} -\sigma(\lambda_i) \log(e_{y_i}^\top W x_i) + 0.001\norm{W}_F^2
\]
where $(x_i, y_i)$ are the training examples, $\sigma$ denotes the sigmoid function, $\lambda_i \in \R$, and $\norm{\cdot}_F$ is the Frobenius norm.
We optimize $\lambda$ to minimize validation loss, presumably by decreasing the weight of the corrupted examples.
The optimization dimensions are $|\lambda| = 5000$, $|W| = 7850$.
\citet{franceschi2017forward} previously solved this problem with full RMD, and it happens to satisfy many of our theoretical assumptions,
making it an interesting case for empirical study.\footnote{
We have reformulated the constrained problem from \citep{franceschi2017forward} as an unconstrained one that more closely matches our theoretical assumptions.
For the same reason, we regularized $g$ to make it strongly convex.
Finally, we do not retrain on the hypercleaned training + validation data.
This is because, for our purposes, comparing the performance of $\hat w^*$ across $K$ is sufficient.}

We optimize the lower-level problem $g$ through $T=100$ steps of gradient descent with $\gamma=1$ and consider how adjusting $K$ changes the performance of $K$-RMD.\footnote{
See Appendix \ref{sec:mnist_appendix} for more experimental setup.}
Our hypothesis is that $K$-RMD for small $K$ works almost as well as full RMD in terms of validation and test accuracy,
while requiring less time and far less memory.
We also hypothesize that $K$-RMD does almost as well as full RMD in identifying which samples were corrupted~\citep{franceschi2017forward}.
Because our formulation of the problem is unconstrained, the weights $\sigma(\lambda_i)$ are never exactly zero.
However, we can calculate an F1 score by setting a threshold on $\lambda$: if $\sigma(\lambda_i) < \sigma(-3) \approx 0.047$, then the hyper-cleaner has marked example $i$ as corrupted.\footnote{F1 scores for other choices of the threshold were very similar. See Appendix \ref{sec:mnist_appendix} for details.}

Table \ref{table:mnist} reports these metrics for various $K$.
We see that $1$-RMD is somewhat worse than the others, and that validation loss (the outer objective $f$) decreases with $K$ more quickly than generalization error.
The F1 score is already maximized at $K=5$.
These preliminary results indicate that in situations with limited memory, $K$-RMD for small $K$ (e.g. $K=5$) may be a reasonable fallback: it achieves results close to full backprop, and it runs about twice as fast.

\begin{table}
  \caption{\label{table:mnist} Hypercleaning metrics after 1000 hyperiters.}
  \center
  \vskip -1em
  \resizebox{0.8\columnwidth}{!}{\begin{tabular}{ccccc}
$K$ & Test Acc. & Val. Acc. & Val. Loss & F1\\
\hline
1 & 87.50 & 89.32 & 0.413 & 0.85 \\
5 & 88.05 & 89.90 & 0.383 & 0.89 \\
25 & 88.12 & 89.94 & 0.382 & 0.89 \\
50 & 88.17 & 90.18 & 0.381 & 0.89 \\
100 & 88.33 & 90.24 & 0.380 & 0.88 \\
\end{tabular}}
\end{table}

From a theoretical optimization perspective, we wonder whether $K$-RMD converges to a stationary point of $f$.
Data hypercleaning satisfies all of the assumptions of Theorem~\ref{th:convergence of K-step backprop} except that $B_t$ is not full column rank (since $M < N$).
In particular, the validation loss $f$ is deterministic and satisfies $\nabla_{\lambda} f = 0$.
Figure \ref{fig:mnist convergence} plots the norm of the true gradient $\d_\lambda f$ on a log scale at the $K$-RMD iterates for various $K$.
We see that, despite satisfying almost all assumptions, this problem exhibits biased convergence.
The limit of $\norm{\d_\lambda f}$ decreases slowly with $K$, but recall from Table \ref{table:mnist} that practical metrics improve more quickly.

\begin{figure}
  \center
  \includegraphics[width=0.6\columnwidth]{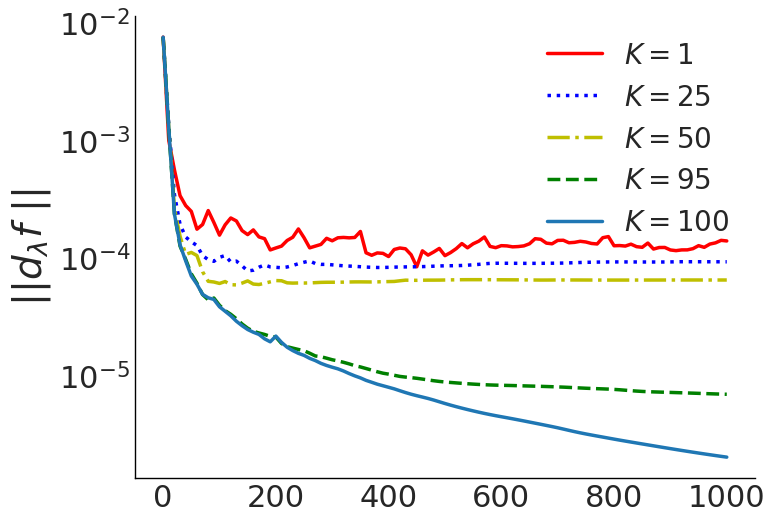}
  \caption{\label{fig:mnist convergence}
  $\norm{\d_\lambda f}$ vs. hyperiteration for hypercleaning.}
\end{figure}


\vspace{-2mm}
\subsubsection{Task interaction}
\label{sec:cifar}
\vspace{-1mm}
We next consider the problem of multitask learning~\cite{evgeniou2005learning}.
Similar to~\citep{franceschi2017forward}, we formulate this as a hyperparameter optimization problem as follows.
The lower-level objective $g(w, \{C, \rho\})$ learns $V$ different linear models with parameter set $w = \{w_v\}_{v=1}^V$:
\begin{equation*}
  l(w) + \sum_{1 \leq i,j \leq K} C_{ij} \norm{w_i - w_j}^2 + \rho \sum_{v=1}^V \norm{w_v}^2
\end{equation*}
where $l(w)$ is the training loss of the multi-class linear logistic regression model, $\rho$ is a regularization constant, and $C$ is a nonnegative, symmetric hyperparameter matrix that encodes the similarity between each pair of tasks.
After $100$ iterations of gradient descent with learning rate $0.1$, this yields $\hat{w}^*$.
The upper-level objective $c(\hat{w}^*)$ estimates the linear regression loss of the learned model $\hat{w}^*$ on a validation set.
Presumably, this will be improved by tuning $C$ to reflect the true similarities between the tasks.
The tasks that we consider are image recognition trained on very small subsets of the datasets CIFAR-$10$ and CIFAR-$100$.\footnote{
See Appendix~\ref{sec:cifar_appendix} for more details.}

From an optimization standpoint, we are most interested in the upper-level loss on the validation set, since that is what is directly optimized, and its value is a good indication of the performance of the inexact gradient.
Figure~\ref{fig:cifar_rate} plots this learning curve along with two other metrics of theoretical interest: norm of the true gradient, and cosine similarity between the true and approximate gradients.
%
In CIFAR100, the validation error and gradient norm plots show that $K$-RMD converges to an approximate stationary point with 
a bias that rapidly decreases as $K$ increases, agreeing with Proposition~\ref{pr:exp convergence}.
Also, we find that negative values exist in the cosine similarity of $1$-RMD, which implies 
that not all the assumptions in Theorem~\ref{th:convergence of K-step backprop} hold for this 
problem (e.g. $B_t$ might not be full rank, or the the inner problem might not be locally strong convex around $\hat{w}^*$.)
In CIFAR10, some unusal behavior happens. For $K > 1$, the truncated gradient and the full gradient directions eventually become almost the same.
We believe this is a very interesting observation but beyond the scope of the paper to explain.

In Table~\ref{table:cifar}, we report the testing accuracy over 10 trials.
While in general increasing the number of back-propagation steps improves accuracy, the gaps are small.
A thorough investigation of the relationship between convergence and generalization is an interesting open question of both theoretical and practical importance.
\begin{table}[h]
\caption{\label{table:cifar}Test accuracy for task interaction. Few-step $K$-RMD achieves similar performance as full RMD.}
\centering
\resizebox{.8\columnwidth}{!}{\begin{tabular}{clcccc}
&{\bf Method} & {\bf Avg. Acc.} & {\bf Avg. Iter.} & {\bf Sec/iter.}\\
\hline
\multirow{4}{*}{\begin{turn}{-90}\hspace{-8pt}\small{CIFAR-10}\end{turn}}
&$1$-RMD & $61.11\pm1.23$ & $3300$ & $0.8$\\
&$5$-RMD & $61.33\pm1.08$ & $4950$ & $1.3$\\
&$25$-RMD & $61.31\pm1.24$ & $4825$ & $1.4$\\
&Full RMD & $61.28\pm1.21$ & $4500$ & $2.2$\\
\hline
\multirow{4}{*}{\begin{turn}{-90}\hspace{-8pt}\small{CIFAR-100}\end{turn}}
&$1$-RMD& $34.37\pm0.63$ & $7440$ & $1.0$\\
&$5$-RMD& $34.34\pm0.68$ & $8805$ & $1.4$\\
&$25$-RMD& $34.51\pm0.69$ & $8660$ & $1.6$\\
&Full RMD& $34.70\pm0.64$ & $5670$ & $2.8$
\end{tabular}}
\end{table}

\begin{figure}[t]
  \centering
  \subfigure{\includegraphics[scale=0.13]{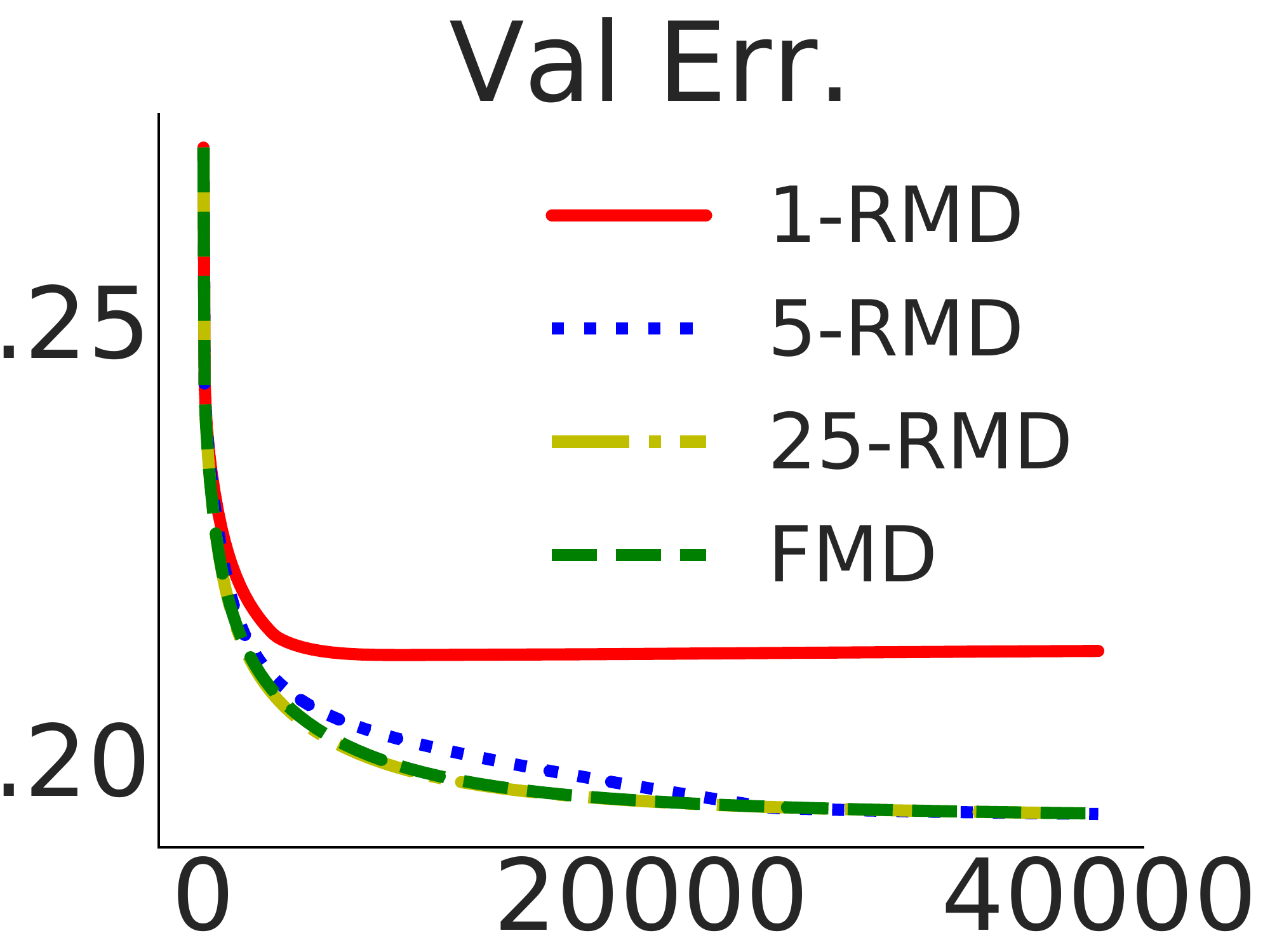}}
  \subfigure{\includegraphics[scale=0.13]{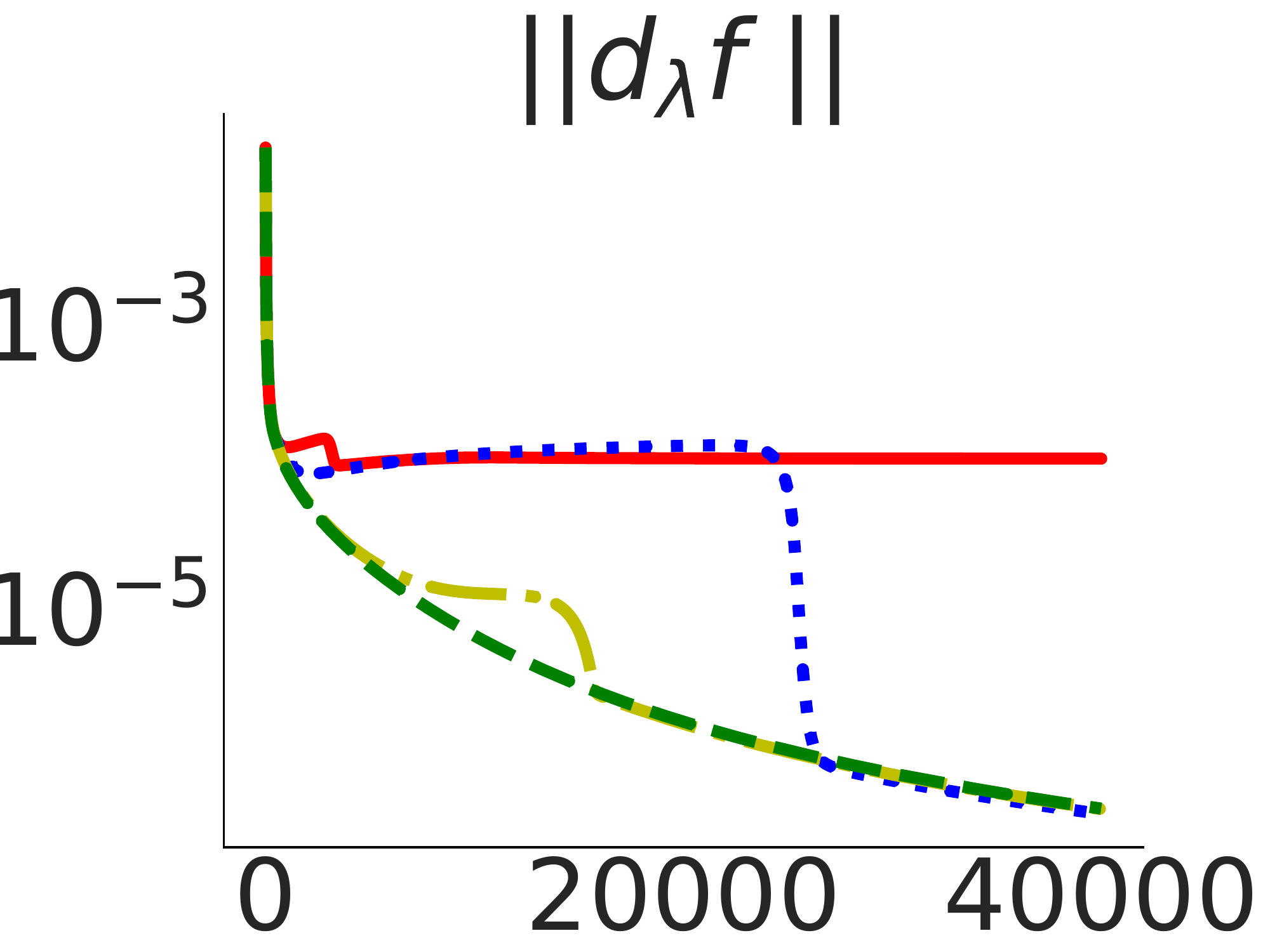}}
  \subfigure{\includegraphics[scale=0.13]{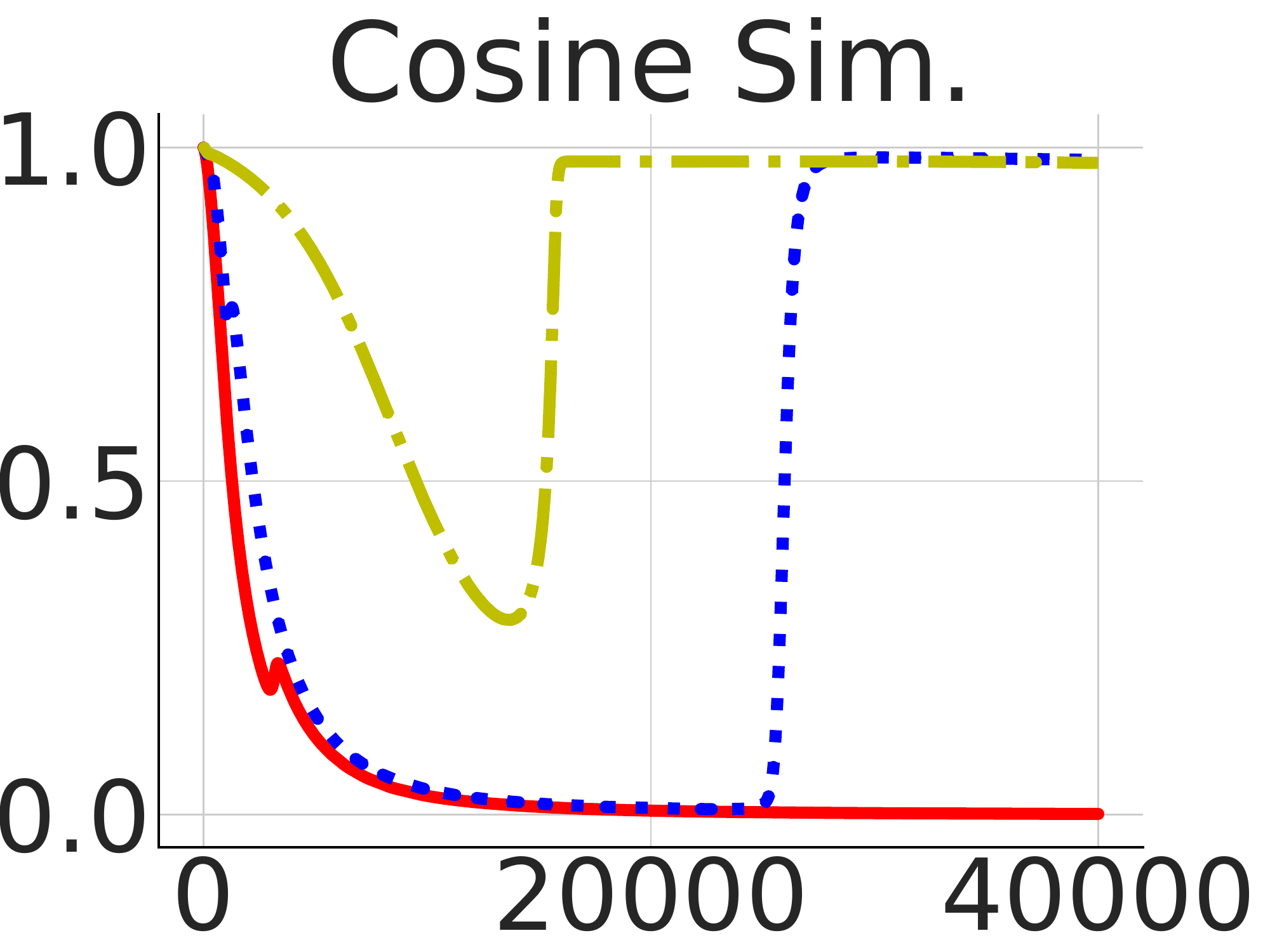}}\\
  \subfigure{\includegraphics[scale=0.13]{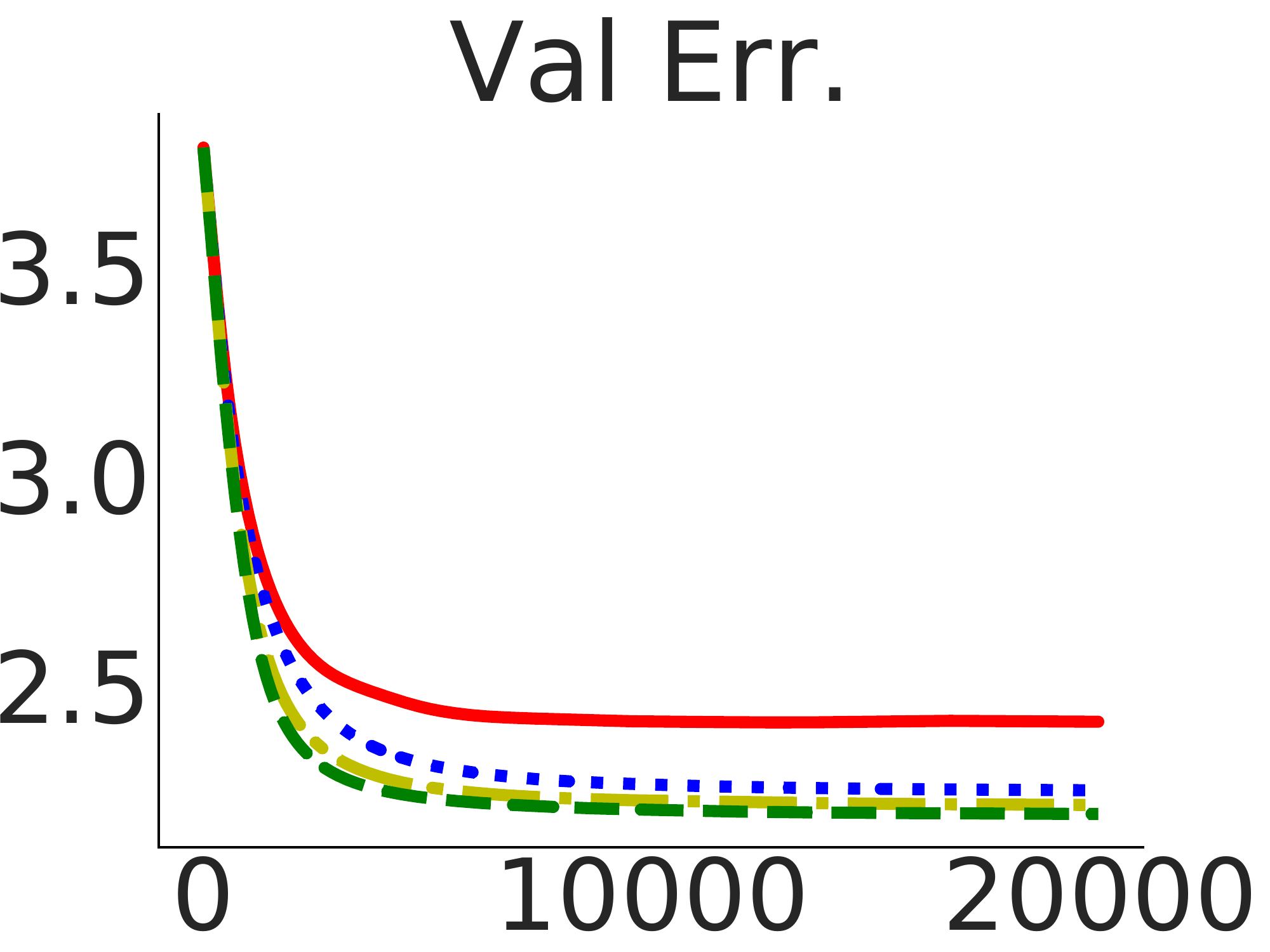}}
  \subfigure{\includegraphics[scale=0.13]{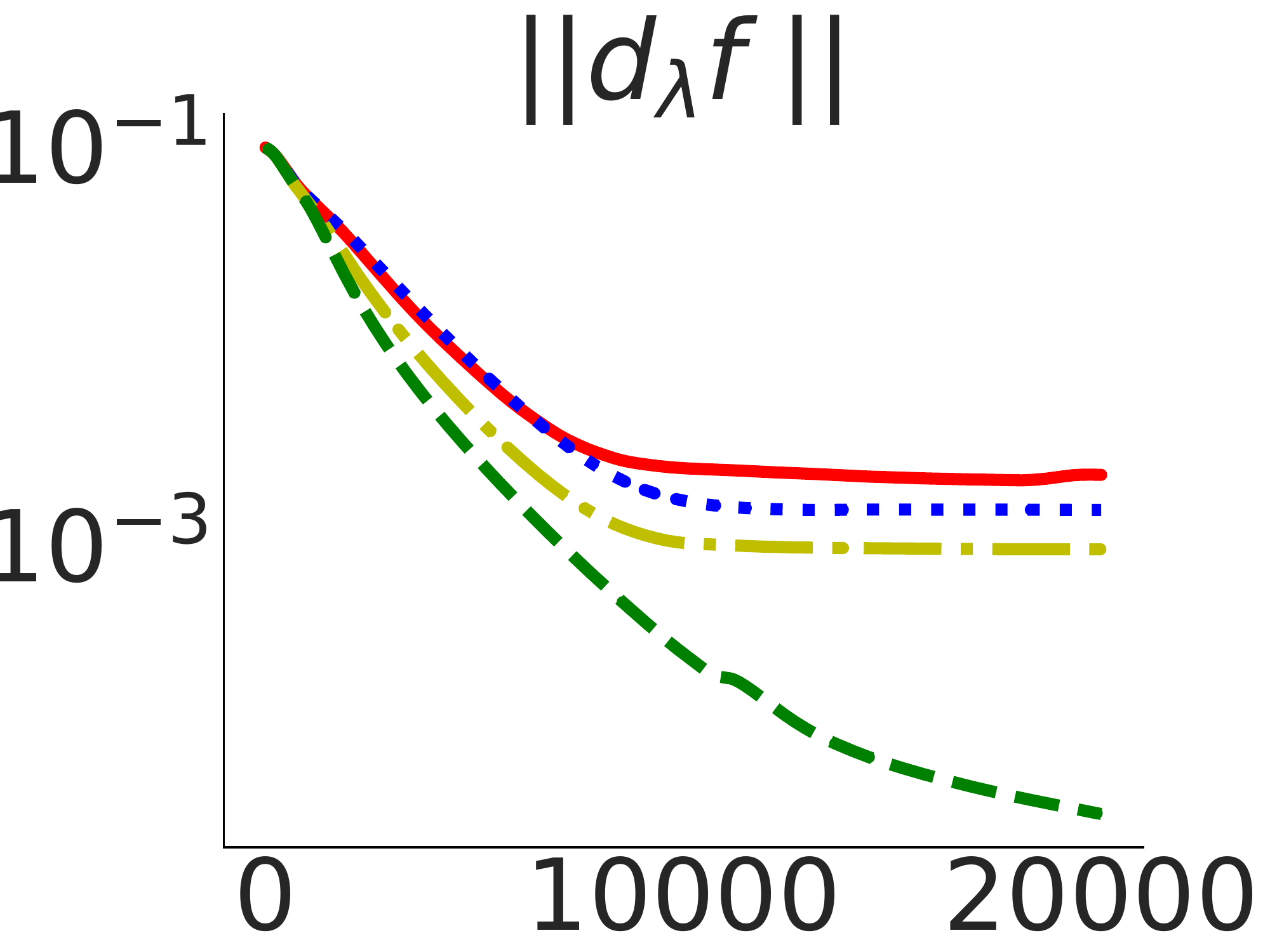}}
  \subfigure{\includegraphics[scale=0.13]{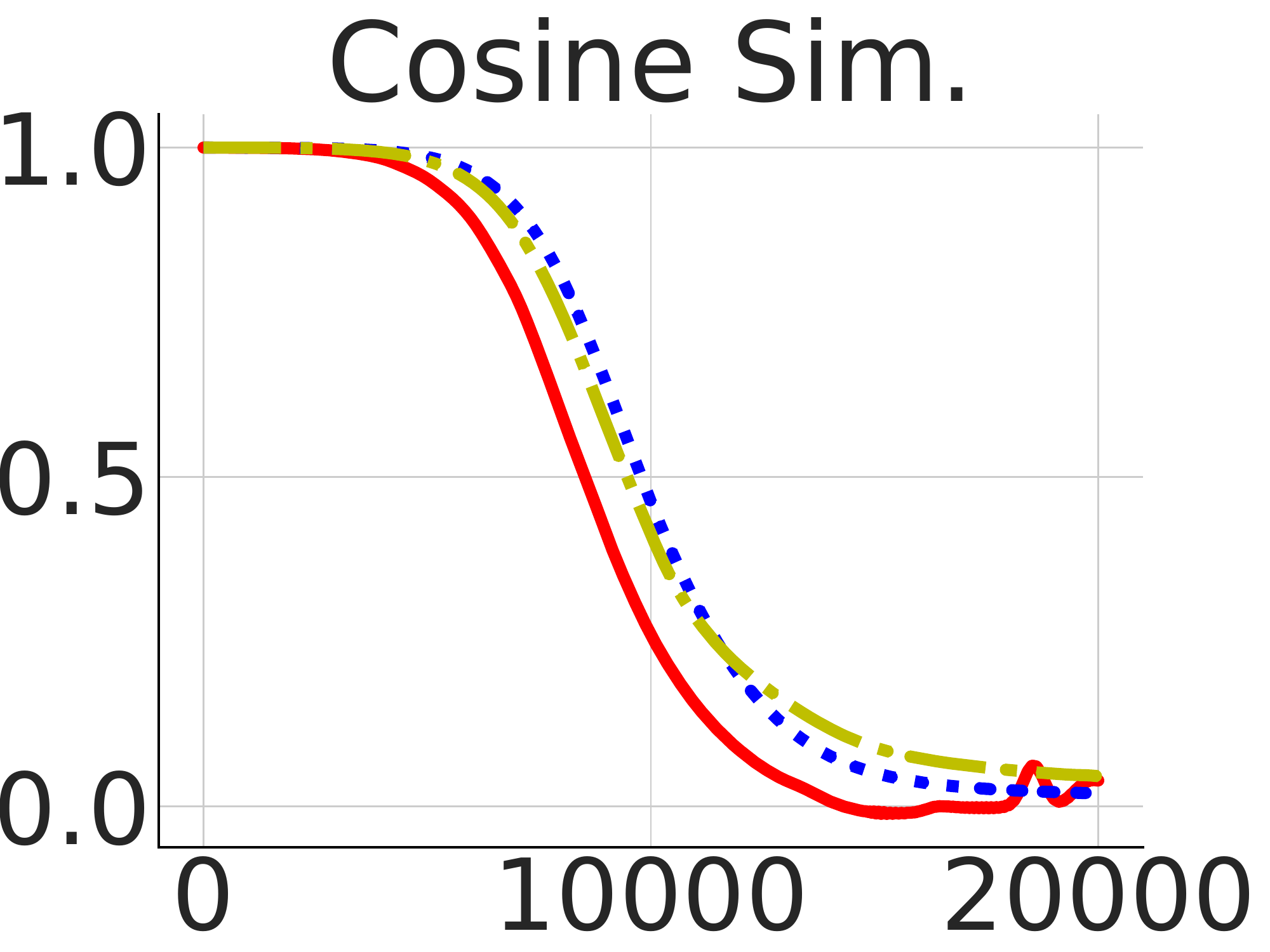}}
  \caption{\small{\label{fig:cifar_rate}Upper-level objective loss (first column), norm of the exact gradient (second column), and cosine similarity (last column) vs. hyper-iteration on CIFAR10 (first row) and CIFAR100 (second row) datasets.}}
\end{figure}

\vspace{-2mm}
\subsection{Meta-learning: One-shot classification} \label{sec:omniglot}
\vspace{-1mm}
The aim of this experiment is to evaluate the performance of truncated back-propagation in multi-task, stochastic optimization problems.
We consider in particular the one-shot classification problem~\citep{finn2017model}, where each task $\TT$ is a $k$-way classification problem and the goal is learn a hyperparameter $\lambda$ such that each task can be solved with few training samples. 

In each hyper-iteration, we sample a task, a training set, and a validation set as follows: 
First, $k$ classes are randomly chosen from a pool of classes 
to define the sampled task $\TT$. Then the training set $S=\{(x_i, y_i)\}_{i=1}^k$ is created 
by randomly drawing one training example $(x_i, y_i)$ from each of the $k$ classes. The validation set $Q$ is constructed similarly, but with more examples from each class.
The lower-level objective $g_S(w, \lambda)$ is
\begin{equation*} \textstyle
\sum_{(x_i, y_i) \in S} l(nn(x_i; w, \lambda), y_i) + \sum_{j=1}^V \rho_j ||w_j - c_j||^2
\end{equation*}
where $l(\cdot, \cdot)$ is the $k$-way cross-entropy loss, and $nn(\cdot; w,\lambda)$ is a deep neural network parametrized by $w=\{w_1, \dots, w_V\}$ and optionally hyperparameter  $\lambda$. To prevent overfitting in the lower-level optimization,
we regularize each parameter $w_j$ to be close to center $c_j$ with weight $\rho_j>0$.
Both $c_j$ and $\rho_j$ are hyperparameters, as well as the inner learning rate $\gamma$.
The upper-level objective is the loss of the trained network on the sampled validation set $Q$.
In contrast to other experiments, this is a stochastic optimization problem.
Also, $\AA_{\lambda}(S)(x_i) = nn( x_i ; \hat{w}^*, \lambda)$ depends directly
on the hyperparameter $\lambda$, in addition to the indirect dependence through $\hat{w}^*$ (i.e. $\nabla_\lambda f  \neq 0$).

We use the Omniglot dataset \citep{lake2015human} and a similar neural network as used in~\cite{finn2017model} with small modifications.
Please refer to Appendix~\ref{app:one_shot} for more details about the model and the data splits.
We set $T= 50$ and optimize over the hyperparameter $\lambda = \{\lambda_{l_1}, \lambda_{l_2}, c, \rho, \gamma\}$.
The average accuracy of each model is evaluated over $120$ randomly sampled training and validation sets from the meta-testing dataset.
For comparison, we also try using full RMD with a very short horizon $T=1$, which is common in recent work on few-shot learning~\cite{finn2017model}.

The statistics are shown in Table~\ref{table:oneshot} and the learning curves in Figure~\ref{fig:omniglot_rate}.
 In addition to saving memory, all truncated methods are faster than full RMD, sometimes even five times faster. 
These results suggest that running few-step back-propagation with more hyper-iterations can be more efficient 
than the full RMD. To support this hypothesis, we also ran $1$-RMD and $10$-RMD for an especially large number of hyper-iterations ($15$k).
Even with this many hyper-iterations, the total runtime is less than full RMD with $5000$ iterations, and the results are significantly improved.
We also find that while using a short horizon ($T=1$) is faster, it achieves
a lower accuracy at the same number of iterations.

Finally, we verify some of our theorems in practice.
Figure~\ref{fig:omniglot_rate} (fourth plot) shows that when the lower-level problem is regularized, the relative $\ell_2$ error between the $K$-RMD approximate gradient and the exact gradient decays exponentially as $K$ increases.
This was guaranteed by Proposition~\ref{pr:exp convergence}.
However, this exponential decay is not seen for the non-regularized model ($\rho = 0$).
This suggests that the local strong convexity assumption is essential in order to have exponential decay in practice.
Figure~\ref{fig:omniglot_rate} (third plot) shows the cosine similarity between the inexact gradient and full gradient over the course of meta-training.
Note that the cosine similarity measures are always positive, indicating that the inexact gradients are indeed descent directions.
It also seems that the cosine similarities show a slight decay over time.

\begin{table}
\caption{\label{table:oneshot}Results for one-shot learning on Omniglot dataset. $K$-RMD reaches similar performance as full RMD, is considerably faster, and requires less memory.}%
\center
\begin{tabular}{lccc}
{\bf Method} & {\bf Accuracy} & {\bf iter.} & {\bf Sec/iter.}\\
\hline
$1$-RMD & $95.6$ & $5$K & $0.4$\\
$10$-RMD &$96.3$ & $5$K & $0.7$\\
$25$-RMD & $96.1$ & $5$K &$1.3$\\
Full RMD & $95.8$ & $5$K & $2.2$\\
\hline
\hline
$1$-RMD & $97.7$ & $15$K & $0.4$\\
$10$-RMD & $97.8$ & $15$K & $0.7$\\
Short horizon & $96.6$ & $15$K & $0.1$\\
\end{tabular}
\end{table}

\begin{figure}[t]
  \centering
  \subfigure{\includegraphics[trim={7ex 0 12ex 0}, clip, scale=0.2]{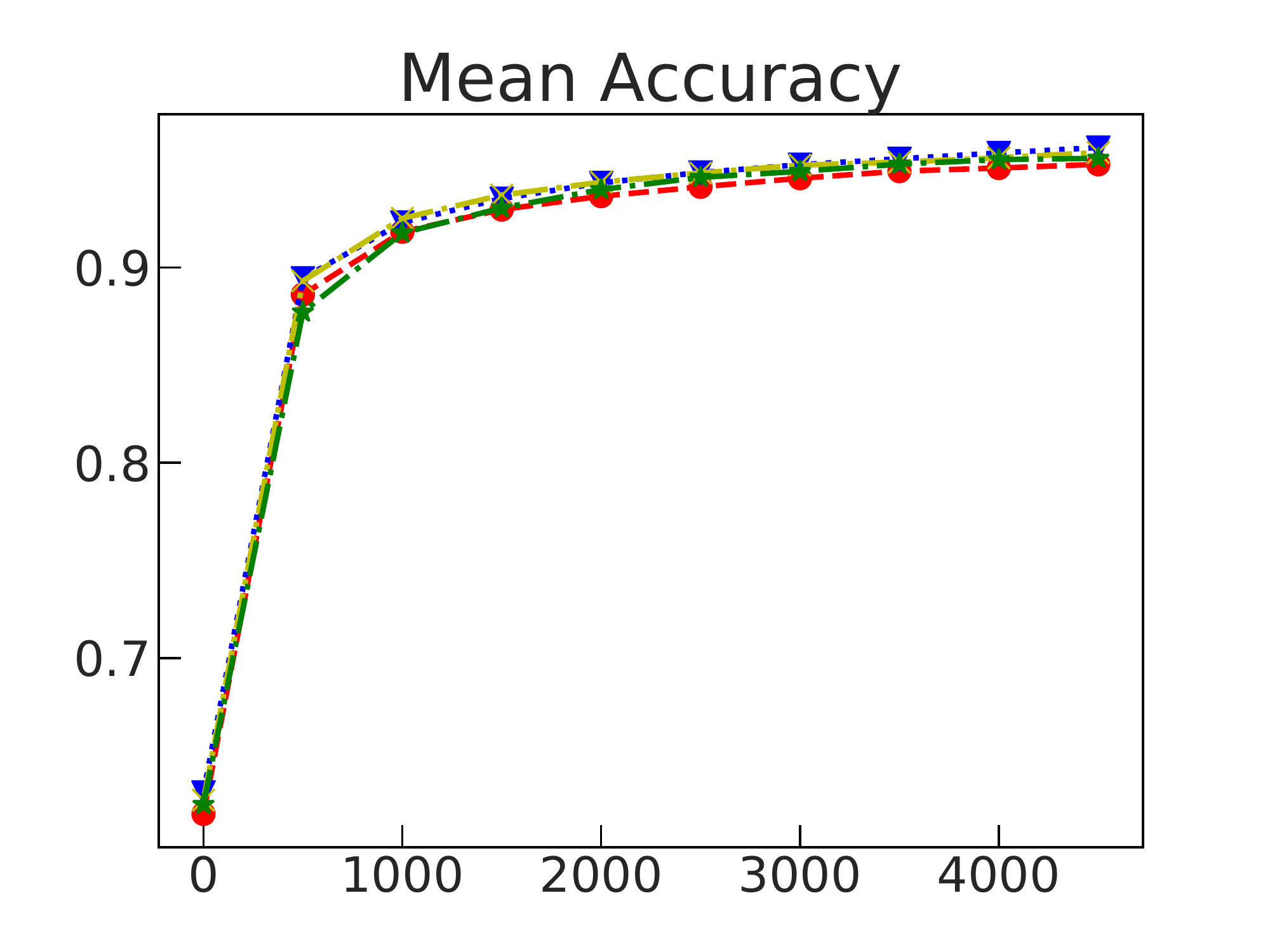}}
  \subfigure{\includegraphics[trim={7ex 0 12ex 0}, clip, scale=0.2]{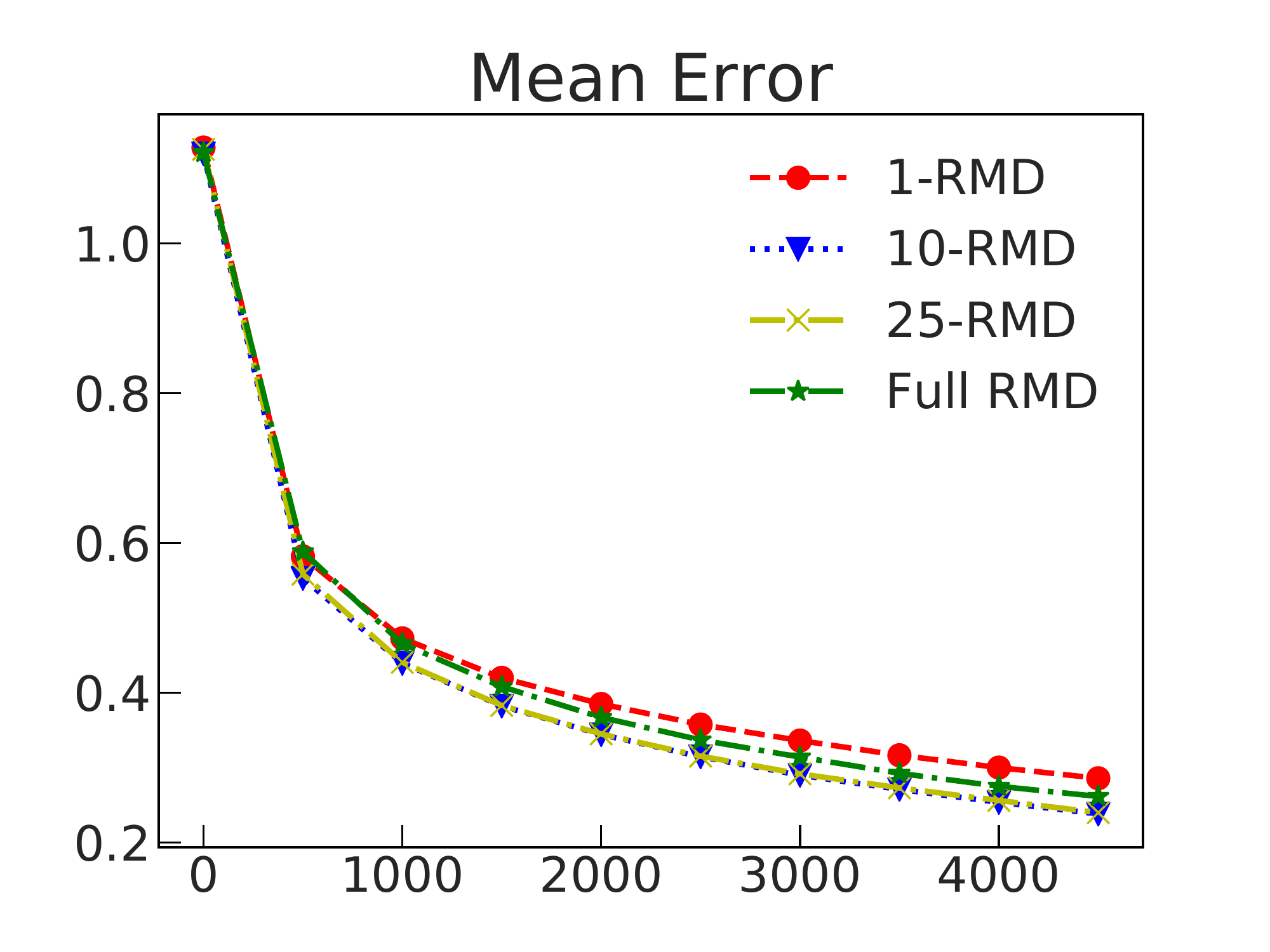}}\\\vspace{-5mm}
  \subfigure{\includegraphics[trim={7ex 0 12ex 0}, clip, scale=0.2]{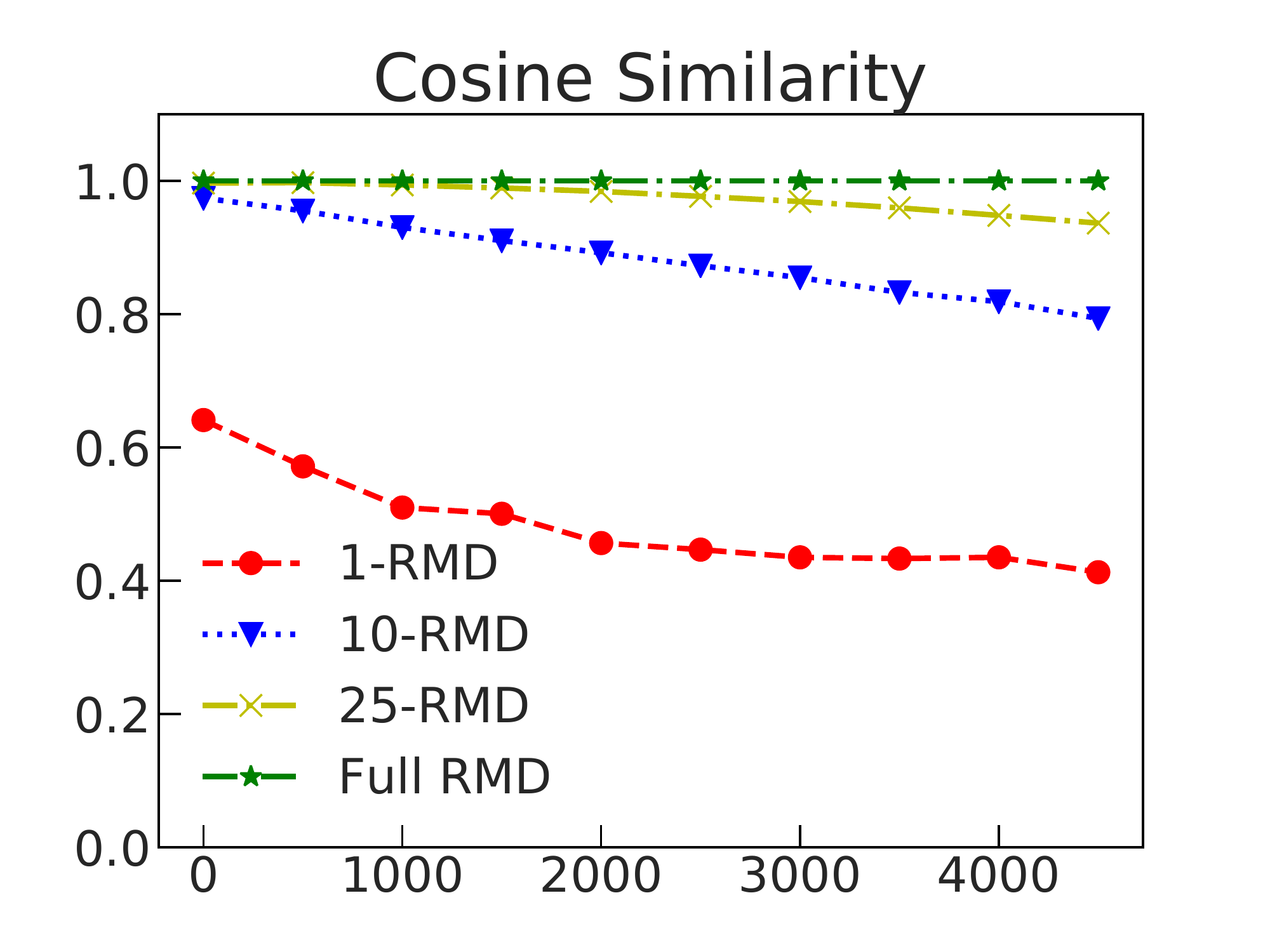}}
  \subfigure{\includegraphics[trim={7ex 0 12ex 0}, clip, scale=0.2]{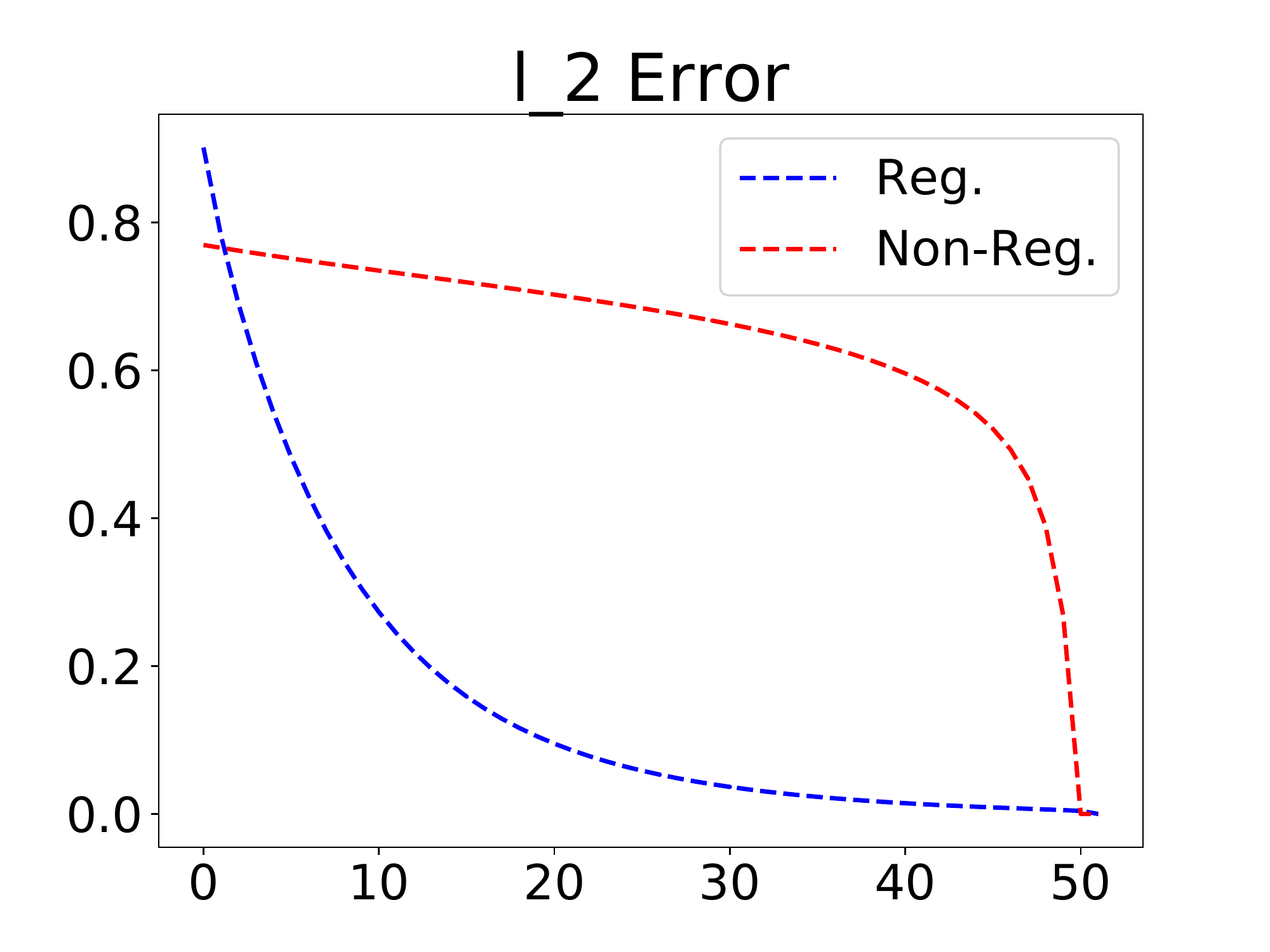}}
  \caption{\small{\label{fig:omniglot_rate}Omniglot results. \textbf{Plots 1 and 2:} Test accuracy and val. error vs. number of hyper-iterations for different RMD depths. $K$-RMD methods show similar performance as the full RMD. \textbf{Plot 3:} Cosine similarity between inexact gradient and full RMD over hyper-iterations. \textbf{Plot 4:} Relative $\ell_2$ error of inexact gradient and full RMD vs. reverse depth. Regularized version shows exponential decay.}}
  \vspace{-5mm}
\end{figure}


\vspace{-2mm}
\section{CONCLUSION}
\vspace{-1mm}
We analyze $K$-RMD, a first-order heuristic for solving bilevel optimization problems when the lower-level optimization is itself approximated in an iterative way.
We show that $K$-RMD is a valid alternative to full RMD from both theoretical and empirical standpoints.
Theoretically, we identify sufficient conditions for which the hyperparameters converge to an approximate or exact stationary point of the upper-level objective.
The key observation is that when $\hat w^*$ is near a strict local minimum of the lower-level objective, gradient approximation error decays exponentially with reverse depth.
Empirically, we explore the properties of this optimization method with four proof-of-concept experiments.
We find that although exact convergence appears to be uncommon in practice, the performance of $K$-RMD is close to full RMD in terms of application-specific metrics (such as generalization error).
It is also roughly twice as fast.
These results suggest that in hyperparameter optimization or meta learning applications with memory constraints, truncated back-propagation is a reasonable choice.

Our experiments use a modest number of parameters $M$, hyperparameters $N$, and horizon length $T$.
This is because we need to be able to calculate both $K$-RMD and full RMD in order to compare their performance.
One promising direction for future research is to use $K$-RMD for bilevel optimization problems that require powerful function approximators at both levels of optimization.
Truncated RMD makes this approach feasible and enables comparing bilevel optimization to other meta-learning methods on difficult benchmarks.

\bibliographystyle{apalike}
\bibliography{ref}

\clearpage

\onecolumn

\appendix
\section*{Appendix}


\section{Proof of Proposition~\ref{pr:exp convergence}} \label{app:proof of exp convergence}
\expConvergence*

\begin{proof}
	Let $\d_{\lambda} f  - h_{T-K}  = e_K$. By definition of $h_{T-K}$, 
	\begin{align*}
	e_K =  \left( \sum_{t=0}^{T-K}  B_{t} A_{t+1} \cdots A_{T-K}\right) A_{T-K+1} \cdots A_{T}  \nabla_{\hat{w}^*} f 
	\end{align*}
	Therefore, when $g$ is locally $\alpha$-strongly convex with respect to $w$ in the neighborhood of $\{w_{T-K-1}, \dots, w_T\}$,
	\begin{align*}
	\norm{e_K} &\leq \norm{ \sum_{t=0}^{T-K}  B_{t} A_{t+1} \cdots A_{T-K}} \norm{ A_{T-K+1} \cdots A_{T}  \nabla_{\hat{w}^*} f} \\
	&\leq (1 - \gamma \alpha)^{K} \norm{ \nabla_{\hat{w}^*} f}   \norm{ \sum_{t=0}^{T-K}  B_{t} A_{t+1} \cdots A_{T-K}} 
	\end{align*}
	Suppose $g$ is $\beta$-smooth but nonconvex. In the worst case, if the smallest eigenvalue of $\nabla_{w,w} g(w_{t-1}, \lambda)$ is $-\beta$, then $\norm{A_t} = 1 + \gamma \beta \leq 2$ for $t = 0,\dots,T-K$. This gives the bound in~\eqref{eq:nonconvex bound}. However, if $g$ is globally strongly convex, then
	\begin{align*}
	\norm{e_K} &\leq \norm{ \nabla_{\hat{w}^*} f} (1 - \gamma \alpha)^{K}  \max_{t\in\{0,\dots,T-K\} } \norm{B_t} \sum_{t=0}^{T-K} (1 - \gamma \alpha)^{t} 
	\end{align*}
	The bound~\eqref{eq:convex bound} uses the fact that
	$\sum_{t=0}^{T-K} (1 - \gamma \alpha)^{t}  \leq  \sum_{t=0}^{\infty} (1 - \gamma \alpha)^{t}  = \frac{1}{\gamma \alpha}  $
\end{proof}

\section{Proof of Lemma~\ref{lm:bound of many terms}} \label{app:proof of bound of many terms}

\boundOfManyTerms*

\begin{proof}
  To illustrate the idea, here we prove the case where $K=1$. For $K>1$, similar steps can be applied. To prove the statement, we first expand the inner product by definition
\begin{align*}
h_{T-1}^\top \d_{\lambda} f = \norm{h_{T-1}}^2 + \left(  B_T  \nabla_{\hat{w}^*} f\right)^\top \left(  \sum_{t=0}^{T-1}  B_{t} A_{t+1}  \cdots A_{T-1} \right)  A_{T}  \nabla_{\hat{w}^*} f
\end{align*}
where we recall $h_{T-1} = B_T  \nabla_{\hat{w}^*} f$ as $\nabla_\lambda f = 0 $ by assumption.

Next we show a technical lemma, which provides a critical tool to bound the second term above; its proof is given in the next section.
\begin{lemma} \label{lm:bound of single term}
Let $g$ be $\alpha$-strongly convex and $\beta$-smooth. Assume $B_t$ and $A_t$ are Lipschitz continuous in $w$, and assume $B_T$ has full column rank. For $\gamma \leq \frac{1}{\beta}$, 
\begin{align*}
&\left(  B_T  \nabla_{\hat{w}^*} f\right)^\top   B_{t} A_{t+1} \cdots A_{T}  \nabla_{\hat{w}^*} f  \\
&\geq  (1-\gamma\alpha)^{T-t} \norm{ B_T  \nabla_{\hat{w}^*} f}^2   - \norm{ \nabla_{\hat{w}^*} f  }^2 O\left( \frac{e^{-\alpha\gamma(T-1)}}{1- e^{-\alpha\gamma}} +   \left(  \gamma (\beta-\alpha)  \right)^{T-t}  \right)
\end{align*}
\end{lemma}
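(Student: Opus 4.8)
For the gradient-descent map $\Xi_{t+1}(w_t,\lambda)=w_t-\gamma\nabla_w g(w_t,\lambda)$ we have $A_{t+1}=I-\gamma\nabla_{w,w}g(w_t,\lambda)$ and $B_{t+1}=-\gamma\nabla_{\lambda,w}g(w_t,\lambda)$. Since $g$ is globally $\alpha$-strongly convex and $\beta$-smooth, each $A_s$ is symmetric with spectrum in $[1-\gamma\beta,\,1-\gamma\alpha]\subseteq[0,\,1-\gamma\alpha]$, so $A_s\succeq 0$ and $\norm{A_s}\le 1-\gamma\alpha$; write $v:=\nabla_{\hat w^*}f$. Because gradient descent on a well-conditioned objective converges linearly, $\norm{w_s-w^*}\le(1-\gamma\alpha)^s\norm{w_0-w^*}$, and combined with the assumed Lipschitz continuity of $A_t$ and $B_t$ in $w$ this yields geometric bounds $\norm{A_s-A_\infty}\lesssim(1-\gamma\alpha)^{s-1}$ and $\norm{B_s-B_\infty}\lesssim(1-\gamma\alpha)^{s-1}$, where $A_\infty,B_\infty$ denote the values at $w^*$. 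These two facts --- spectral control of the $A_s$ and geometric convergence of the iterates --- are the engine of the whole argument.

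\textbf{Main decomposition.} I would first rewrite the inner product as a quadratic form, $(B_Tv)^\top B_t A_{t+1}\cdots A_T v = v^\top (B_T^\top B_t)\,A_{t+1}\cdots A_T\,v$, and use that full column rank of $B_T$ makes $B_T^\top B_T\succ 0$. The plan is to split the estimate into a stationary leading part and two error families. Replacing $B_t$ by $B_\infty$ and each $A_s$ by $A_\infty$ produces the limiting form $v^\top B_\infty^\top B_\infty A_\infty^{T-t}v$; the discrepancy is the \emph{non-stationarity error}. To extract the advertised scalar factor I would then write $A_\infty=(1-\gamma\alpha)I-\gamma C$ with $C:=\nabla_{w,w}g(w^*,\lambda)-\alpha I\succeq 0$ and $\norm{C}\le\beta-\alpha$, and expand $A_\infty^{T-t}=\big((1-\gamma\alpha)I-\gamma C\big)^{T-t}$. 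The pure power $(1-\gamma\alpha)^{T-t}I$ contributes the main term $(1-\gamma\alpha)^{T-t}\norm{B_\infty v}^2$ (then transferred back to $\norm{B_Tv}^2$ via the geometric bound on $\norm{B_T-B_\infty}$), while the fully perturbed monomial $(-\gamma C)^{T-t}$ contributes the $\big(\gamma(\beta-\alpha)\big)^{T-t}$ error.

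\textbf{Error accounting.} The non-stationarity error I would bound position by position: swapping $A_s$ for $A_\infty$ at index $s$ incurs a factor $\norm{A_s-A_\infty}\lesssim(1-\gamma\alpha)^{s-1}$, while the surviving sub-product still carries $\norm{A_{s+1}\cdots A_T}\le(1-\gamma\alpha)^{T-s}$; the product of these is of order $(1-\gamma\alpha)^{T-1}$, \emph{independent of }$s$, and summing the single-swap contributions over the $O(T)$ positions (a geometric series with ratio $1-\gamma\alpha\approx e^{-\gamma\alpha}$) produces exactly the factor $\dfrac{e^{-\alpha\gamma(T-1)}}{1-e^{-\alpha\gamma}}$; the $B_t\to B_\infty$ swap is handled identically. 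After absorbing $\norm{B_\infty v}\le\norm{B_\infty}\norm{v}$ into the constant hidden by $O(\cdot)$, every error is of the form $\norm{v}^2\times(\text{geometric})$, and collecting the leading term with the two error families gives the claimed inequality.

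\textbf{Main obstacle.} The delicate point is not the non-stationarity bookkeeping but the lower bound on the \emph{limiting} quadratic form $v^\top B_\infty^\top B_\infty A_\infty^{T-t}v$ by $(1-\gamma\alpha)^{T-t}\norm{B_\infty v}^2$. Because $B_\infty^\top B_\infty$ and $A_\infty$ need not commute, the product $B_\infty^\top B_\infty A_\infty^{T-t}$ is not symmetric, so one cannot simply invoke $A_\infty^{T-t}\succeq(1-\gamma\beta)^{T-t}I$ through a congruence; the symmetric part of a product of two positive definite matrices can be indefinite. Controlling the intermediate (non-leading, non-maximal) monomials of the binomial expansion of $\big((1-\gamma\alpha)I-\gamma C\big)^{T-t}$ against this unsymmetrized form --- so that everything beyond the leading power collapses into the stated $\big(\gamma(\beta-\alpha)\big)^{T-t}$ term once $\gamma$ is small enough that $\gamma(\beta-\alpha)<1$ --- is where the essential use of strong convexity, smoothness, and $\gamma\le 1/\beta$ enters, and is the step I expect to require the most care. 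This is exactly the per-term estimate that the $K=1$ argument in Lemma~\ref{lm:bound of many terms} then sums over $t$.
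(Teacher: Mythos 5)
Your overall architecture is the same as the paper's: exploit linear convergence of gradient descent on the strongly convex $g$ together with Lipschitz continuity of $A_t,B_t$ in $w$ to swap $B_t$ and each $A_s$ one at a time for a fixed reference matrix, sum the resulting geometric series to produce the $e^{-\alpha\gamma(T-1)}/(1-e^{-\alpha\gamma})$ error, and then treat the power of the reference matrix as a perturbation of $(1-\gamma\alpha)^{T-t}I$. The only structural difference is the anchor: the paper freezes at the last iterate ($A_T,B_T$), so the target $\norm{B_T\nabla_{\hat w^*}f}^2$ appears directly, whereas you freeze at $w^*$ ($A_\infty,B_\infty$) and must transfer back; this is cosmetic. (Minor bookkeeping point: your per-swap bound "$(1-\gamma\alpha)^{s-1}(1-\gamma\alpha)^{T-s}$, independent of $s$" would sum to $T(1-\gamma\alpha)^{T-1}$, not to a geometric series; to get the stated $1/(1-e^{-\alpha\gamma})$ factor you must keep the $s$-dependence of $\norm{w_{T-1}-w_{s-1}}\lesssim e^{-\alpha\gamma(s-1)}$ as the paper does in its bound on $\Delta_2$.)

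The step you flag as the main obstacle is a genuine gap, and your proposed fix does not close it. Expanding $A_\infty^{n}=\bigl((1-\gamma\alpha)I-\gamma C\bigr)^{n}$ (legitimate, since $I$ and $C$ commute) gives $(1-\gamma\alpha)^{n}I+\sum_{j=1}^{n}\binom{n}{j}(1-\gamma\alpha)^{n-j}(-\gamma C)^{j}$, and the tail is bounded in norm only by $(1-\gamma\alpha+\gamma(\beta-\alpha))^{n}-(1-\gamma\alpha)^{n}$, not by $(\gamma(\beta-\alpha))^{n}$: the intermediate monomials $1\le j\le n-1$ do not collapse. This is not a technicality. Take $g(w,\lambda)=\frac12 w^\top H w-\lambda^\top w$ with $H=\diag(\alpha,\beta)$, so that $A_s\equiv\diag(1-\gamma\alpha,1-\gamma\beta)$ and $B_s\equiv\gamma I$ (all swap errors vanish identically), and take $\nabla_{\hat w^*}f$ along the $\beta$-eigenvector. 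The left-hand side is $\gamma^2(1-\gamma\beta)^{n}\norm{\nabla_{\hat w^*}f}^2$, while the claimed lower bound is $\gamma^2(1-\gamma\alpha)^{n}\norm{\nabla_{\hat w^*}f}^2$ minus errors of order $(\gamma(\beta-\alpha))^{n}$ and $e^{-\alpha\gamma(T-1)}$; since $(1-\gamma\alpha)^{n}-(1-\gamma\beta)^{n}\ge n(1-\gamma\beta)^{n-1}\gamma(\beta-\alpha)$, which dominates $(\gamma(\beta-\alpha))^{n}$ for $n\ge 2$ and small $\gamma$, the inequality fails for any constants hidden in the $O(\cdot)$ that do not depend on $T-t$. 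You should know that the paper's own proof makes exactly the move you propose --- its $\Delta_3$ retains only the top-degree term $\norm{A_T-(1-\gamma\alpha)I}^{T-t}$ --- so this soft spot is inherited from the source rather than introduced by your variant; repairing it requires weakening the leading coefficient (e.g.\ toward $(1-\gamma\beta)^{T-t}$, which still does not follow from a naive congruence because $B_T^\top B_T$ and $A_\infty$ need not commute, as you correctly observe) or restructuring the argument so that the sum over $t$ in Lemma~\ref{lm:bound of many terms} is bounded directly.
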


By Lemma~\ref{lm:bound of single term},  we can then write
\begin{align*}
h_{T-1}^\top \d_{\lambda} f  
\geq  \norm{ B_T  \nabla_{\hat{w}^*} f}^2 \left( 1+  \sum_{t=0}^{T-1} (1-\gamma\alpha)^{T-t} \right)
 -   \norm{\nabla_{\hat{w}^*} f}^2  O \left(  \sum_{t=0}^{T-1}  \frac{e^{-\alpha\gamma(T-1)}}{1- e^{-\alpha\gamma}} + \left(  \gamma (\beta-\alpha)  \right)^{T-t} \right)  
\end{align*}
Because 
\begin{align*}
\sum_{t=0}^{T-1}\left(  \gamma (\beta-\alpha)  \right)^{T-t} 
&= \sum_{k=1}^{T}\left(  \gamma (\beta-\alpha)  \right)^k \leq  \frac{\gamma (\beta-\alpha)}{1-\gamma (\beta-\alpha)}
 &\text{($\because \gamma \leq \beta$)}
\end{align*}
and  $B_T^\top B_T$ is non-singular by assumption, 
\begin{align*}
h_{T-1}^\top \d_{\lambda} f  
&\geq  \norm{ \nabla_{\hat{w}^*} f}^2 \Omega(1)
-   \norm{\nabla_{\hat{w}^*} f}^2  O \left(   \frac{ T e^{-\alpha\gamma(T-1)}}{1- e^{-\alpha\gamma}}  +  \frac{\gamma (\beta-\alpha)}{1-\gamma (\beta-\alpha)} \right)\\
&\geq C  \norm{ \nabla_{\hat{w}^*} f}^2 
\end{align*}
for some $c > 0$, when $T$ is large enough and $\gamma$ is small enough.
The implication holds because $\norm{\d_\lambda f } \leq  O(\norm{ \nabla_{\hat{w}^*} f})$.
\end{proof}

\subsection{Proof of Lemma~\ref{lm:bound of single term}} \label{app:proof of bound of single term}
\begin{proof}
Let  $C_A$ and $C_B$ be the Lipschitz constant of $A_t$ and $B_t$. 
First, we see that the  inner product can be lower bounded by the following terms
\begin{align*}
\left(  B_T  \nabla_{\hat{w}^*} f\right)^\top   B_{t} A_{t+1} \cdots A_{T}  \nabla_{\hat{w}^*} f  
\geq  (1-\gamma\alpha)^{T-t} \norm{ B_T  \nabla_{\hat{w}^*} f}^2   - \Delta_1 - \Delta_2 - \Delta_3 
\end{align*}
where
\begin{align*}
\Delta_1 = C_B \norm{B_T  \nabla_{\hat{w}^*} f}  \norm{ \nabla_{\hat{w}^*} f }  \norm{w_{T-1} - w_{t-1}} \norm{ A_{t+1} \cdots A_{T}}
\end{align*}
\begin{align*}
\Delta_2 =  C_A  \norm{ B_T^\top   B_{T}  \nabla_{\hat{w}^*} f} \norm{ \nabla_{\hat{w}^*} f} \sum_{k=t+1}^{T-1}\norm{w_{T-1} - w_{k-1}} \norm{ A_{t+1} \cdots A_{k-1}} \norm{ A_{T} }^{T-k}
\end{align*}
\begin{align*}
\Delta_3 =  \norm{\nabla_{\hat{w}^*} f} \norm{B_T^\top B_T \nabla_{\hat{w}^*} f } \norm{A_k - (1 - \gamma \alpha) I}^{T-k}
\end{align*}
The above lower bounds can be shown by the following inequalities: 
\begin{align*}
&\left(  B_T  \nabla_{\hat{w}^*} f\right)^\top   B_{t} A_{t+1} \cdots A_{T}  \nabla_{\hat{w}^*} f  \\
&\geq
\nabla_{\hat{w}^*} f^\top \left( B_T^\top   B_{T} \right) A_{t+1} \cdots A_{T} 
\nabla_{\hat{w}^*} f   - C_B \norm{B_T  \nabla_{\hat{w}^*} f}\norm{w_{T-1} - w_{t-1}} \norm{ A_{t+1} \cdots A_{T}  \nabla_{\hat{w}^*} f } 
\end{align*}
\begin{align*}
&\nabla_{\hat{w}^*} f^\top \left( B_T^\top   B_{T} \right) A_{t+1} \cdots A_{T} \nabla_{\hat{w}^*} f \\
&\geq
\nabla_{\hat{w}^*} f^\top  \left( B_T^\top   B_{T} \right) A_{t+1} \cdots A_{T-2} A^2_{T} \nabla_{\hat{w}^*} f 
  - C_A \norm{w_{T-1} - w_{T-2}} \norm{ A_{t+1} \cdots A_{T-2}} \norm{  A_{T}} \norm{  B_T^\top   B_{T} \nabla_{\hat{w}^*} f}\norm{ \nabla_{\hat{w}^*} f } \\
&\geq
\nabla_{\hat{w}^*} f^\top  B_T^\top B_t A_T^{T-t} \nabla_{\hat{w}^*} f
 - C_A  \norm{ B_T^\top   B_{T}  \nabla_{\hat{w}^*} f} \norm{ \nabla_{\hat{w}^*} f} \sum_{k=t+1}^{T-1}\norm{w_{T-1} - w_{k-1}} \norm{ A_{t+1} \cdots A_{k-1}} \norm{ A_{T} }^{T-k}
\end{align*}
\begin{align*}
\nabla_{\hat{w}^*} f^\top  B_T^\top B_T A_T^{T-t} \nabla_{\hat{w}^*} f 
\geq   (1-\gamma\alpha)^{T-t} \nabla_{\hat{w}^*} f^\top  B_T^\top B_T  \nabla_{\hat{w}^*} f - \norm{\nabla_{\hat{w}^*} f} \norm{B_T^\top B_T \nabla_{\hat{w}^*} f } \norm{A_T - (1 - \gamma \alpha) I}^{T-t}
\end{align*}

Next we upper bound the error terms: $\Delta_1$, $\Delta_2$, and $\Delta_3$. We will use the fact that gradient descent converges linearly when optimizing a strongly convex and smooth function~\citep{hazan2016introduction}.
\begin{lemma} \label{lm:gradient descent}
Let $w_0$ be the initial condition. Running gradient descent to optimize an $\alpha$-strongly convex and $\beta$-smooth function $g$, with step size $ 0 < \gamma \leq \frac{1}{\beta}$, generates a sequence $\{w_t\}$ satisfying
	\begin{align}
	\norm{w_t - w^*} \leq D e^{-\alpha\gamma t}
	\end{align}
	where $D = \norm{w_0 - w^*}$ and $w^* = \argmin g(w)$.
\end{lemma}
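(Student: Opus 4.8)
The plan is to show that the gradient descent update is a contraction toward $w^*$ with factor $1-\gamma\alpha$, and then convert the resulting geometric decay into the stated exponential form using $1-x\le e^{-x}$. First I would record that, since $w^* = \argmin g(w)$, first-order optimality gives $\nabla g(w^*) = 0$, so $w^*$ is a fixed point of the map $w \mapsto w - \gamma\nabla g(w)$. Subtracting $w^*$ from the update $w_{t+1} = w_t - \gamma\nabla g(w_t)$ and inserting $\nabla g(w^*)=0$ yields $w_{t+1} - w^* = (w_t - w^*) - \gamma\bigl(\nabla g(w_t) - \nabla g(w^*)\bigr)$.

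Next, since $g$ is twice differentiable, I would linearize the gradient difference by the fundamental theorem of calculus. Writing the averaged Hessian $H_t = \int_0^1 \nabla_{w,w}\, g\bigl(w^* + s(w_t - w^*)\bigr)\,\der s$, we have $\nabla g(w_t) - \nabla g(w^*) = H_t (w_t - w^*)$, hence $w_{t+1} - w^* = (I - \gamma H_t)(w_t - w^*)$. Because $g$ is $\alpha$-strongly convex and $\beta$-smooth, every Hessian along the segment has spectrum in $[\alpha,\beta]$, so $\alpha I \preceq H_t \preceq \beta I$. The crux is then bounding the spectral norm of the symmetric matrix $I - \gamma H_t$: its eigenvalues are $1 - \gamma\mu$ with $\mu \in [\alpha,\beta]$, and the step-size condition $\gamma \le \frac1\beta$ is exactly what forces $1 - \gamma\mu \ge 1 - \gamma\beta \ge 0$, so these eigenvalues are nonnegative and at most $1 - \gamma\alpha$. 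This gives $\norm{I - \gamma H_t} \le 1 - \gamma\alpha$, and therefore the one-step contraction $\norm{w_{t+1} - w^*} \le (1-\gamma\alpha)\,\norm{w_t - w^*}$.

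Finally I would iterate this contraction from $t=0$ to obtain $\norm{w_t - w^*} \le (1-\gamma\alpha)^t D$ with $D = \norm{w_0 - w^*}$, and apply $1 - \gamma\alpha \le e^{-\gamma\alpha}$ termwise to conclude $\norm{w_t - w^*} \le D\, e^{-\alpha\gamma t}$, as claimed. I do not expect a genuine obstacle here, as this is the classical linear-convergence bound for gradient descent on strongly convex smooth objectives (cf.\ \citep{hazan2016introduction}); the only point needing care is the step-size assumption $\gamma \le \frac1\beta$, without which the contraction factor would be $\max\bigl(|1-\gamma\alpha|, |1-\gamma\beta|\bigr)$, which can exceed $1$ in large-curvature directions. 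If one preferred to avoid twice-differentiability, the same factor $1-\gamma\alpha$ follows by expanding $\norm{w_{t+1}-w^*}^2$ and combining strong convexity with co-coercivity of $\nabla g$, but since the surrounding analysis already assumes $g$ is (locally) twice differentiable, the averaged-Hessian argument is cleanest.
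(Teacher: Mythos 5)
Your proof is correct. Note that the paper does not actually supply its own proof of this lemma; it states it as a known fact and cites \citet{hazan2016introduction}, so there is nothing to diverge from. Your argument --- writing $w_{t+1}-w^* = (I-\gamma H_t)(w_t-w^*)$ with the averaged Hessian $H_t$, using $\alpha I \preceq H_t \preceq \beta I$ and $\gamma \le \tfrac1\beta$ to get $\norm{I-\gamma H_t}\le 1-\gamma\alpha$, and then applying $1-\gamma\alpha \le e^{-\gamma\alpha}$ --- is the standard linear-convergence proof and is exactly what the citation is standing in for; the twice-differentiability it uses is already assumed in the surrounding analysis, and your noted co-coercivity alternative would cover the general case.
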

Lemma~\ref{lm:gradient descent} implies for $T\geq t$, $
\norm{w_{T} - w_{t}} \leq  2 De^{-\alpha\gamma t} $. 

Now we proceed to bound the errors  $\Delta_1$, $\Delta_2$, and $\Delta_3$.
\paragraph{Bound on $\Delta_1$}
Because
\begin{align*}
\norm{w_{T-1} - w_{t-1}} \norm{ A_{t+1} \cdots A_{T}} 
&\leq
2 D  e^{- \alpha\gamma (t-1)}   (1-\gamma \alpha)^{T-t}\\
&\leq 2 D  e^{- \alpha\gamma (t-1)} e^{-\gamma \alpha(T-t)} \\
&= 2 D  e^{ - \alpha\gamma\left ( T-1 \right) } 
\end{align*}
we can upper bound $\Delta_1$ by
\begin{align*}
\Delta_1 &= C_B \norm{B_T  \nabla_{\hat{w}^*} f}  \norm{ \nabla_{\hat{w}^*} f }  \norm{w_{T-1} - w_{t-1}} \norm{ A_{t+1} \cdots A_{T}} \\
& \leq  \norm{B_T \nabla_{\hat{w}^*} f} \norm{\nabla_{\hat{w}^*} f} \times 2 C_B D  e^{ - \alpha\gamma\left ( T-1 \right) } 
\end{align*}

\paragraph{Bound on $\Delta_2$}
Because
\begin{align*}
\sum_{k=t+1}^{T-1}\norm{w_{T-1} - w_{k-1}} \norm{ A_{t+1} \cdots A_{k-1}} \norm{ A_{T} }^{T-k}
&\leq \sum_{k=t+1}^{T-1} 2De^{-\alpha\gamma(k-1)} (1-\alpha\gamma)^{k-1-t+T-k} \\
&\leq 2D  (1-\alpha\gamma)^{T-t-1} \sum_{k=t+1}^{T-1} e^{-\alpha\gamma(k-1)}  \\
&\leq 2D  (1-\alpha\gamma)^{T-t-1} e^{-\alpha\gamma t} \sum_{k=t+1}^{T-1} e^{-\alpha\gamma(k-t-1)}  \\
&\leq 2D   e^{-\alpha\gamma(T-1)} \sum_{m=0}^{T-t} e^{-\alpha\gamma m}  \\
&\leq  \frac{ 2D }{1- e^{-\alpha\gamma}}   e^{-\alpha\gamma(T-1)}   
\end{align*}
we can upper bound $\Delta_2$ by
\begin{align*}
\Delta_2 &=  C_A  \norm{ B_T^\top   B_{T}  \nabla_{\hat{w}^*} f} \norm{ \nabla_{\hat{w}^*} f} \sum_{k=t+1}^{T-1}\norm{w_{T-1} - w_{k-1}} \norm{ A_{t+1} \cdots A_{k-1}} \norm{ A_{T} }^{T-k}
\\
&=   \norm{ B_T^\top   B_{T}  \nabla_{\hat{w}^*} f} \norm{ \nabla_{\hat{w}^*} f} \times  \frac{ 2 C_A D }{1- e^{-\alpha\gamma}}   e^{-\alpha\gamma(T-1)} 
\end{align*}

\paragraph{Bound on $\Delta_3$}
Because 
\begin{align*}
\norm{A_k - (1 - \gamma \alpha) I}  = \norm{ \gamma \left( \alpha I -  \nabla_{w}^2 f(w_{k-1}) \right) } \leq \gamma (\beta-\alpha) 
\end{align*}
we can upper bound $\Delta_3$ by
\begin{align*}
\Delta_3 =  \norm{\nabla_{\hat{w}^*} f} \norm{B_T^\top B_T \nabla_{\hat{w}^*} f } \norm{A_t - (1 - \gamma \alpha) I}^{T-t}
&\leq  \norm{\nabla_{\hat{w}^*} f} \norm{B_T^\top B_T \nabla_{\hat{w}^*} f } \left(  \gamma (\beta-\alpha)  \right)^{T-t}
\end{align*}

\paragraph{Final Result}

Using the bounds on  $\Delta_1$, $\Delta_2$, and $\Delta_3$, we prove the final result.
\begin{align*}
&\left(  B_T  \nabla_{\hat{w}^*} f\right)^\top   B_{t} A_{t+1} \cdots A_{T}  \nabla_{\hat{w}^*} f  \\
&\geq  (1-\gamma\alpha)^{T-t} \norm{ B_T  \nabla_{\hat{w}^*} f}^2   - \Delta_1 - \Delta_2 - \Delta_3 \\
&\geq  (1-\gamma\alpha)^{T-t} \norm{ B_T  \nabla_{\hat{w}^*} f}^2   - \norm{ \nabla_{\hat{w}^*} f  }^2 O\left( \frac{e^{-\alpha\gamma(T-1)}}{1- e^{-\alpha\gamma}} +   \left(  \gamma (\beta-\alpha)  \right)^{T-t}  \right)
\end{align*}
because $B_T$ has full column rank and 
\begin{align*}
\Delta_1 + \Delta_2 + \Delta_3  
&\leq  \norm{ \nabla_{\hat{w}^*} f  }^2 \left(   \frac{ 2 C_A D }{1- e^{-\alpha\gamma}}   e^{-\alpha\gamma(T-1)}  +   2 C_B D  e^{ - \alpha\gamma\left ( T-1 \right) }  +   \left(  \gamma (\beta-\alpha)  \right)^{T-k} \right) \\
&= \norm{ \nabla_{\hat{w}^*} f  }^2 \times O\left( \frac{e^{-\alpha\gamma(T-1)}}{1- e^{-\alpha\gamma}} +   \left(  \gamma (\beta-\alpha)  \right)^{T-t}   \right)  
\qedhere
\end{align*}

\end{proof}

\section{Proof of Theorem~\ref{th:biased convergence}}

\biasedConvergence*

\begin{proof}
	The proof of this theorem is a standard proof of non-convex  optimization with biased gradient estimates. Here we include it for completeness, as part of it will be used later in the proof of Theorem~\ref{th:convergence of K-step backprop}.
	
	Let $\lambda_\tau$ be the $\tau$th iterate. For short hand, we write  $\d_{\lambda} f_{(\tau)} = \d_{\lambda} f(\lambda_\tau)$, and  $h_{T-K, (\tau)} =  h_{T-K}(\lambda_\tau)$.
	Assume $F$ is $L$-smooth and $\norm{  \d_{\lambda} f_{(\tau)} } \leq G$ and $\norm{h_{T-K, (\tau)}} \leq G$ almost surely for all $\tau$. Then by $L$-smoothness, it satisfies
	\begin{align*}
	F(\lambda_{\tau+1}) \leq F (\lambda_\tau)+ \lr{ \nabla F(\lambda_\tau) }{ \lambda_{\tau+1} - \lambda_{\tau}  } + \frac{L}{2} \norm{\lambda_{\tau+1} - \lambda_{\tau} }^2. 
	\end{align*}
	Let $e_\tau = \d_{\lambda} f_{(\tau)} - h_{T-K, (\tau)}$ be the error in the gradient estimate. Substitute the recursive update $\lambda_{\tau+1} = \lambda_{\tau}  - \eta_t h_{T-K, (\tau)}$ to the above inequality. Conditioned on $\lambda_\tau$, it satisfies 
	\begin{align*}
	\E_{|\lambda_\tau } [ F(\lambda_{\tau+1})]  
	\leq F (\lambda_\tau)+\E_{|\lambda_\tau }  \left[ -\eta_t \lr{ \nabla F(\lambda_\tau) }{  h_{T-K, (\tau)}  } + \frac{L  \eta_t^2}{2} \norm{ h_{T-K, (\tau)} }^2  \right].
	\end{align*}
	Because 
	\begin{align} \label{eq:here comes the bias}
	-\E_{|\lambda_\tau } [ \lr{ \nabla F(\lambda_\tau) }{  h_{T-K, (\tau)}  }]  
	&=   \E_{|\lambda_\tau } \left[- \lr{ \nabla F(\lambda_\tau) }{  \d_{\lambda} f_{(\tau)}  }  +  \lr{ \nabla F(\lambda_\tau) }{ e_\tau } \right]  \nonumber  \\
	&\leq  - \norm{\nabla F(\lambda_\tau) }^2 + G \norm{{ e_\tau }}
	\end{align}
	and
	\begin{align*}
	\frac{1}{2} \norm{ h_{T-K, (\tau)} }^2  =  \frac{1}{2} \norm{ \d_{\lambda} f_{(\tau)} }^2 +  \frac{1}{2} \norm{ e_\tau }^2 - \lr{\d_{\lambda} f_{(\tau)} }{ h_{T-K, (\tau)}  }
	\leq  \frac{3 G^2 }{2}  + \frac{1}{2} \norm{ e_\tau }^2  
	\end{align*}
	we can upper bound $\E_{|\lambda_\tau } [ F(\lambda_{\tau+1})] $ as 
	\begin{align*}
	\E_{|\lambda_\tau } [ F(\lambda_{\tau+1})]  
	\leq  F (\lambda_\tau) + \E_{|\lambda_\tau } \left[ - \eta_\tau \norm{\nabla F(\lambda_\tau) }^2 + \eta_\tau G \norm{{ e_\tau }}  +  L \eta_\tau^2 \left( \frac{3 G^2 }{2}  + \frac{1}{2} \norm{ e_\tau }^2  \right) \right] 
	\end{align*}
	Performing telescoping sum with the above inequality, we have
	\begin{align*}
	\E\left[ \sum_{\tau=1}^{R} \eta_\tau \norm{\nabla F(\lambda_\tau) }^2 \right] &\leq  F (\lambda_1) +    \E\left[ \sum_{\tau=1}^{R} G \eta_\tau  \norm{{ e_\tau }}+   L \eta_\tau^2 \left( \frac{3 G^2 }{2}  + \frac{1}{2} \norm{ e_\tau }^2  \right)  \right] 
	\\ &\leq  F (\lambda_1) +       \sum_{\tau=1}^{R} \left( G \epsilon \eta_\tau    +  \frac{  L (3 G^2 + \epsilon^2)  }{2}   \eta_\tau^2 \right)
	\end{align*}
	Dividing both sides by $\sum_{\tau=1}^{R} \eta_\tau $ and using the facts that $\eta_\tau = O(\frac{1}{\sqrt{\tau}})$
	and that
	\begin{align*}
	\frac{ \sum_{\tau=1}^R \frac{1}{\tau} }{ \sum_{\tau=1}^R \frac{1}{\sqrt{\tau}} } = O\left(\frac{\log R}{\sqrt{R}}\right)
	\end{align*}
	proves the theorem. \qedhere	
\end{proof}

\section{Proof of Theorem~\ref{th:convergence of K-step backprop}}

\convKStep*

\begin{proof}
First we consider the special case when $S$ is deterministic.
Let $H \geq K$. 
We decompose the full gradients into four parts
\begin{align*}
\nabla F  = \d_{\lambda} f = \nabla_\lambda f + q + r + e
\end{align*}
where 
\begin{align*}
q =  \sum_{t=T-K+1}^{T}  B_{t} A_{t+1} \cdots A_{T}  \nabla_{\hat{w}^*} f \\
r =  \sum_{t=T-H+1}^{T-K}  B_{t} A_{t+1} \cdots A_{T}  \nabla_{\hat{w}^*} f \\
e =  \sum_{t=0}^{T-H}  B_{t} A_{t+1} \cdots A_{T}  \nabla_{\hat{w}^*} f 
\end{align*}
We assume that $w_t$ enters a locally strongly convex region for $t\geq H$. This implies, by Proposition~\ref{pr:exp convergence}, that $\norm{e} \leq O( e^{-\alpha\gamma H} \norm{\nabla_{\hat{w}^*} f} )$.

To prove the theorem, we first verify two conditions: 
\begin{enumerate}
\item By Lemma~\ref{lm:bound of many terms}, the assumption $ \nabla_\lambda f^\top (\d_\lambda f + h_{T-K} -\nabla_\lambda f )  
\geq \Omega(\norm{\nabla_\lambda f}^2)$, and  $\norm{e} \leq O( e^{-\alpha\gamma H} \norm{\nabla_{\hat{w}^*} f} )$:
\begin{align*}
\d_{\lambda} f^\top h_{T-K} &=  (\nabla_\lambda f + q + r + e)^\top (\nabla_\lambda f + q) \\
&= \norm{\nabla_\lambda f}^2  + \nabla_\lambda f^\top (q + e + r)  + q^\top \nabla_{\lambda} f +  q^\top (q + r) + q^\top e \\
&\geq \Omega(\norm{\nabla_\lambda f}^2) + q^\top (q + r) + q^\top e &\text{(Assumption)}
\\
&\geq  \Omega( \norm{\nabla_\lambda f}^2) + \Omega( \norm{\nabla_{\hat{w}^*} f }^2) + q^\top e   & \text{(Lemma~\ref{lm:bound of many terms})} \\
&\geq  \Omega(
\norm{\nabla_\lambda f}^2) + \Omega( \norm{\nabla_{\hat{w}^*} f }^2)  - O\left(e^{-\alpha\gamma H}  \norm{\nabla_{\hat{w}^*} f }^2 \right)  &\text{($\norm{e} \leq O( e^{-\alpha\gamma H} \norm{\nabla_{\hat{w}^*} f} )$)}
\end{align*}
where we note 
\begin{align*}
\d_\lambda f + h_{T-K} -\nabla_\lambda f  
&=
\nabla_\lambda f + q + r + e + \nabla_\lambda f + q - \nabla_\lambda f\\
&= \nabla_\lambda f + q + r + e  + q 
\end{align*}

Therefore, for $H$ large enough, it holds that 
\begin{align} \label{eq:sufficient decrease condition}
\d_{\lambda} f^\top h_{T-K}  \geq \Omega( \norm{\nabla_\lambda f}^2 +  \norm{\nabla_{\hat{w}^*} f }^2)  
\end{align}
\item By definition of $h_{T-K} = \nabla_\lambda f + q $,  it holds that 
\begin{align} \label{eq:not-so-large condition}
\norm{h_{T-K}}^2 \leq 2\norm{\nabla_\lambda f}^2 + 2 \norm{q}^2 \leq O( \norm{\nabla_\lambda f}^2 + \norm{\nabla_{\hat{w}^*} f}^2 ) 
\end{align}
\end{enumerate}

Next, we prove a lemma
\begin{lemma} \label{lm:convergence of nonconvex optimization}
Let $f$ be a lower-bound and $L$-smooth function. Consider the iterative update rule
\begin{align*}
	x_{t+1} = x_t - \eta g_t
\end{align*}
where $g_t$ satisfies $g_t^\top \nabla f(x_t) \geq c_1 h_t^2 $ and $\norm{g_t}^2 \leq c_2  h_t^2$, for some constant $c_1, c_2 >0$ and scalar $h_t$. 
Suppose $f$ is lower-bounded and $\eta$ is chosen such that $\left( - c_1 \eta + \frac{L c_2\eta^2 }{2} \right) \leq 0 $. Then $\lim\limits_{t\to\infty} h_t = 0$.
\end{lemma}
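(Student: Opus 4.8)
The plan is to run the standard descent-lemma argument for non-convex gradient-type methods, converting each step into a guaranteed decrease of $f$ proportional to $h_t^2$ by means of the two stated inequalities, and then to extract $h_t \to 0$ from summability.

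First I would invoke the $L$-smoothness of $f$ to obtain the quadratic upper bound
\begin{align*}
f(x_{t+1}) \leq f(x_t) + \nabla f(x_t)^\top (x_{t+1}-x_t) + \frac{L}{2}\norm{x_{t+1}-x_t}^2 .
\end{align*}
Substituting the update $x_{t+1}-x_t = -\eta g_t$, then applying the alignment condition $g_t^\top \nabla f(x_t) \geq c_1 h_t^2$ to the linear term and the magnitude bound $\norm{g_t}^2 \leq c_2 h_t^2$ to the quadratic term, yields the per-step inequality
\begin{align*}
f(x_{t+1}) \leq f(x_t) + \left(-c_1\eta + \frac{L c_2\eta^2}{2}\right) h_t^2 .
\end{align*}

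Second, I would set $c := c_1\eta - \frac{L c_2\eta^2}{2}$ and arrange $c > 0$, so that the bound rearranges to $c\, h_t^2 \leq f(x_t) - f(x_{t+1})$. Telescoping over $t = 1,\dots,R$ and using that $f$ is bounded below by some $f_\star$ gives
\begin{align*}
c \sum_{t=1}^{R} h_t^2 \leq f(x_1) - f(x_{R+1}) \leq f(x_1) - f_\star < \infty .
\end{align*}
Letting $R \to \infty$, the nonnegative series $\sum_{t=1}^{\infty} h_t^2$ has bounded partial sums and therefore converges; since the terms of a convergent series vanish, $h_t^2 \to 0$, hence $h_t \to 0$.

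The computation is routine, so the only point requiring care is the interpretation of the step-size condition. As literally written, $-c_1\eta + \frac{L c_2\eta^2}{2} \leq 0$ with \emph{equality} only delivers monotone decrease of $f(x_t)$, which is insufficient to force $h_t \to 0$; I would therefore treat it as the strict condition $c > 0$ (equivalently $0 < \eta < 2c_1/(L c_2)$), which is exactly what makes the telescoped sum summable. Everything else — the substitution, the telescoping against the lower bound, and the passage from summability to $h_t \to 0$ — is immediate.
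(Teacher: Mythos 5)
Your proof is correct and follows essentially the same route as the paper's: the descent lemma, substitution of the update, the two assumed bounds, telescoping against the lower bound of $f$, and summability of $\sum h_t^2$. Your observation that the step-size condition must be strict ($-c_1\eta + \tfrac{Lc_2\eta^2}{2} < 0$) for the argument to yield $h_t \to 0$ is a valid catch --- the paper's proof implicitly relies on the same strict inequality even though its statement only asserts $\leq 0$.
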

\begin{proof}
By $L$-smoothness,
\begin{align*}
f(x_{t+1}) - f(x_t) &\leq \nabla f(x_t)^\top (x_{t+1} - x_t) + \frac{L}{2} \norm{x_{t+1} - x_t}^2 \\
&=  - \eta \nabla f(x_t)^\top g_t + \frac{L \eta^2 }{2} \norm{g_t}^2 \\
&\leq \left( - c_1 \eta + \frac{L c_2 \eta^2 }{2} \right) h_t^2
\end{align*}
By telescoping sum, we can show
$
\sum_{t=0}^{\infty} \left( c \eta - \frac{L \eta^2 }{2} \right) h_t^2 < \infty
$, 
which implies $\lim_{t\to\infty} h_t  = 0$.
\end{proof}

Finally, we prove the main theorem by applying Lemma~\ref{lm:convergence of nonconvex optimization}. 
Consider a deterministic problem. 
Take $h_t^2 = \norm{\nabla_\lambda f (\lambda_t)}^2 + \norm{\nabla_{\hat{w}^*} f (\lambda_t) }^2 $. 
Because of~\eqref{eq:sufficient decrease condition} and~\eqref{eq:not-so-large condition}, by Lemma~\ref{lm:convergence of nonconvex optimization}, it satisfies that 
\begin{align*}
\lim_{t\to\infty} h_t  =  \lim_{t\to\infty}  \norm{\nabla_\lambda f (\lambda_t)}^2 + \norm{\nabla_{\hat{w}^*} f (\lambda_t) }^2  = 0
\end{align*}
As $\norm{\d_\lambda f } \leq  O(\norm{\nabla_\lambda f} + \norm{\nabla_{\hat{w}^*} f })$, it shows $\norm{\d_\lambda f }$ converges to zero in the limit. 

%
%
%
%
\end{proof}

\section{Proof of Theorem~\ref{th:counterexample}} \label{app:proof of counterexample}
\counterExample*
\begin{proof}
We prove the non-convergence using the following strategy. First we show that, when assumption 3 in Theorem~\ref{th:convergence of K-step backprop}, i.e.
\begin{align} \label{eq:interfering free assumption}
		 \nabla_\lambda f^\top (\d_\lambda f + h_{T-K} -\nabla_\lambda f )  \geq \Omega(\norm{\nabla_\lambda f}^2)
\end{align}
does not hold, there is some problem such that $h_{T-k} \neq 0$ for all stationary points (i.e. $\lambda$ such that $\d_\lambda f = 0$). Then we show that, for such a problem, optimizing $\lambda$ with $h_{T-k}$ cannot converge to any of the stationary points.

\paragraph{Counter example}
To construct the counterexample, we consider a scalar deterministic bilevel optimization problem of the form
\begin{align}   \label{eq:counterexample}
\begin{split}
&\min_{\lambda } 
 \frac{1}{2} (\hat{w}^*)^2 + \phi(\lambda)    \\
\text{s.t.} \quad &  \hat{w}^* \approx w^* \in \argmin_{w } \frac{1}{2} (w - \lambda)^2
\end{split}
\end{align}
in which $\phi$ is some perturbation function that we will later define, and 
  $\hat{w}^*$ is computed by performing $T>1$ steps of gradient descent in the lower-level optimization problem with some constant initial condition $w_0$ and  constant step size $0<\gamma<1$, i.e. 
\begin{align*}
\hat{w}^* = w_T, \qquad  w_{t+1}  = w_t - \gamma (w_t - \lambda)
\end{align*}

We can observe this problem satisfies \emph{almost} all the assumptions in Theorem~\ref{th:convergence of K-step backprop}:
\begin{enumerate}
\item The lower-level objective  $g$ is smooth and strongly convex. (Proposition~\ref{pr:exp convergence})
\item The upper-level objective $F$ is smooth. (Theorem~\ref{th:biased convergence})
\item The lower-level objective $g$ is second-order continuously differentiable (assumption 1 in Theorem~\ref{th:convergence of K-step backprop})
\item The Jacobian if full rank, i.e. $B_t=\gamma >0$  (assumption 2 in Theorem~\ref{th:convergence of K-step backprop})
\item The upper-level objective function  is deterministic, i.e. $F=f$  (assumption 4 in Theorem~\ref{th:convergence of K-step backprop})
\end{enumerate}
But we will show that properly setting $\phi$ can break the non-interfering assumption in~\eqref{eq:interfering free assumption} (i.e. assumption 3 in Theorem~\ref{th:convergence of K-step backprop}) and then creates a problem such that optimizing $\lambda$ with $K$-RMD does not converge to an exact stationary point.

We follow the two-step strategy mentioned above.
\paragraph{Step 1: Non-vanishing approximate gradient}
Without loss of generality, let us consider optimizing $\lambda$ with $1$-RMD. In this case we can write the approximate and the exact gradients in closed form as 
\begin{align} \label{eq:gradients in closed form}
h_{T-1} = \nabla\phi+ w^*\gamma, \qquad 
\d_\lambda f =  \nabla\phi+ w^*\gamma \sum_{t=0}^{T} (1-\gamma)^{T-t}
\end{align}
which are given by~\eqref{eq:unrolling} and~\eqref{eq:incomplete RMD}. We will show that by properly choosing $\phi$, we can define $f(\lambda) = \frac{1}{2} (\hat{w}^*)^2 + \phi(\lambda)$ such that, at any of the stationary points of $f$, the approximate gradient of $1$-RMD does not vanish. That is, we show when $\d_\lambda f =0$, $h_{T-1} \neq 0$.

Before proceeding, let us define $u = w^*\gamma$ and $v =  w^*\gamma \sum_{t=0}^{T} (1-\gamma)^{T-t}$ for convenience. 
To show how to construct $\phi$, let us consider the stationary points in the case\footnote{Note in this special case, assumption 3 in Theorem~\ref{th:convergence of K-step backprop} holds trivially when $\phi(\lambda)=0$ (i.e. $\nabla_{\lambda} f = 0$) and optimizing $\lambda$ with $K$-RMD converges to an exact stationary point. 
} when $\phi=0$. Let $P_0$ denote the set of these stationary points, i.e. $P_0 = \{ \lambda : v=0  \}$. Since $f$ is smooth and lower-bounded, we know that $P_0$ is non-empty, and from the construction of our counterexample we know that $P_0$ contains exactly the $\lambda$s such that $w^* = 0$. 

This implies that for $\lambda \in \R \backslash P_0$, it satisfies $w^* \neq 0$ and therefore 
\begin{align} \label{eq:uv inequality}
uv = ( w^* \gamma )^2  \sum_{t=0}^{T} (1-\gamma)^{T-t} > 0
\end{align}
We use this fact to pick an adversarial $\phi$. Consider any smooth, lower-bounded $\phi$ whose stationary points are not in $P_0$, e.g. $\phi(\lambda) = \frac{1}{2}(\lambda - \lambda_0)^2$ and $\lambda_0 \notin P_0$. Then $f(\lambda) = \frac{1}{2}(\hat{w}^*)^2 + \phi(\lambda) $ has a non-empty set of stationary points $P_\phi$ such that $P_\phi \cap P_0 = \emptyset$.
We see that, for such $\phi$,  the non-interfering assumption (assumption 3 in Theorem~\ref{th:convergence of K-step backprop}) is violated in $P_\phi$: 
\begin{align*}
\nabla_\lambda f^\top (\d_\lambda f + h_{T-1} -\nabla_\lambda f )
&=\nabla_\lambda f^\top ( \nabla_\lambda f + u -\nabla_\lambda f )
&\text{$\because \d_\lambda f=0$ and $h_{T-1}=\nabla_{\lambda} f + u$}
	 \\
&=\nabla_\lambda \phi^\top  u \\
&= - v u &\text{$\because 0=\d_\lambda f = \nabla_{\lambda}\phi + v$}\\
&<0 &\text{$\because \eqref{eq:uv inequality}$ and  $P_\phi \cap P_0 = \emptyset$}\\
&<  (\nabla_{\lambda}\phi )^2  &\text{$\because v>0$ for $\lambda \in P_\phi$}
\end{align*}
And we show for any $\lambda \in P_\phi$ it holds that $h_{T-1} \neq 0$. This can be seen from the definition
\begin{align*}
h_{T-1} = \nabla\phi+ u = \d_\lambda f + u - v = u - v \neq 0
\end{align*}
where the last inequality is because $w^*\neq 0 $ for  $\lambda \in P_\phi$. 

\paragraph{Step 2: Non-convergence to any stationary point}
We have shown that there is a problem which satisfies all the assumptions but assumption 3 of Theorem~\ref{th:convergence of K-step backprop}, and at any of its stationary points (i.e. when $\d_\lambda f=0$) we have $h_{T-K}\neq 0$. 
Now we show this property implies failure to converge to the stationary points for the general problems considered in Theorem~\ref{th:counterexample} (i.e. we do not rely on the form made in Step 1 anymore).

We prove this by contradiction. Let $\lambda^*$ be one of the stationary points. We choose $\delta_0>0$ such that, for some $\epsilon >0$, $\norm{h_{T-K}} > \epsilon/\gamma$ for all $\lambda $ inside the neighborhood $\{ \lambda: \norm{\lambda - \lambda^*} < \frac{\delta_0}{2}\}$, where we recall $\gamma$ is the step size of the lower-level optimization problem.
 A non-zero $\delta_0$ exists because $h_{T-1}$ is continuous by our assumption and $h_{T-K} \neq 0$ at $\lambda^*$. 

We are ready to show the contradiction. Let $\delta =\min \{ \delta_0, \epsilon \}$.  Suppose there is a sequence $\{\lambda_\tau\}$ that converges to the stationary point $\lambda^*$. This means that there is $0<M<\infty$ such that,  $\forall \tau\geq M$, $\norm{\lambda_{\tau} - \lambda^*} < \frac{\delta}{2}$, which implies that  $\forall \tau \geq M$, $\norm{\lambda_{\tau+1} - \lambda_{\tau}} < \delta$. However, by our choice of $\delta_0$, $\norm{\lambda_{\tau+1} - \lambda_{\tau}} = \gamma \norm{h_{T-K}} > \epsilon \geq \delta$, leading to a contradiction. 

Thus, no sequence $\{\lambda_\tau\}$ converges to any of the stationary points. This concludes our proof.
\end{proof}

\section{Proof of Proposition~\ref{pr:taylor}}

\matrixTaylor*

\begin{proof}
  Recall our shorthand that $\nabla_{\lambda,w}g$ and $\nabla_{w,w}g$ are evaluated at $(w^*, \lambda)$. In the limit, it holds that 
\begin{align*}
  \lim_t A_{t} &=  \lim_t \nabla_{w} \Xi_{t}(w_{t-1}, \lambda) =   \nabla_{w}( w^* - \gamma \nabla_{w} g(w^*,\lambda) ) = I - \gamma \nabla_{w,w} g \eqqcolon A_{\infty}\\
  \lim_t B_{t} &=  \lim_t \nabla_{\lambda} \Xi_{t}(w_{t-1}, \lambda) =   \nabla_{\lambda}( w^* - \gamma \nabla_{w} g(w^*,\lambda) ) =  - \gamma \nabla_{\lambda,w} g \eqqcolon B_{\infty}
\end{align*}
To prove the equality~\eqref{eq:infinte series}, we use Lemma~\eqref{lm:taylor series}. 
\begin{lemma} \label{lm:taylor series}
	\citep{horn1990matrix} For a matrix $A$ with $\norm{A} < 1$, it satisfies that 
	\begin{align*}
	(I-A)^{-1} = \sum_{k=0}^{\infty} A^k
	\end{align*}
\end{lemma}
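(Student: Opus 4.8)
The statement is the classical Neumann series identity, so the plan is to follow the standard argument for matrix power series, specialized to finite dimensions where completeness is automatic. First I would establish that the series $\sum_{k=0}^{\infty} A^k$ converges. By submultiplicativity of the operator norm we have $\norm{A^k} \leq \norm{A}^k$, and since $\norm{A} < 1$ the scalar geometric series $\sum_{k=0}^\infty \norm{A}^k = \frac{1}{1-\norm{A}}$ converges. Because the space of square matrices equipped with the operator norm is complete, the partial sums $S_n := \sum_{k=0}^n A^k$ form a Cauchy sequence: for $m > n$, $\norm{S_m - S_n} \leq \sum_{k=n+1}^m \norm{A}^k$, which is arbitrarily small for large $n$. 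Hence the partial sums converge to some limit matrix $S$.

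Next I would identify $S$ with $(I-A)^{-1}$ via a telescoping identity. Multiplying $S_n$ by $I-A$ collapses the sum from both sides:
\begin{align*}
(I - A)\, S_n = \sum_{k=0}^{n} A^k - \sum_{k=0}^{n} A^{k+1} = I - A^{n+1},
\end{align*}
and symmetrically $S_n\,(I - A) = I - A^{n+1}$. Since $\norm{A^{n+1}} \leq \norm{A}^{n+1} \to 0$, the remainder $A^{n+1}$ vanishes as $n \to \infty$. Passing to the limit and using continuity of matrix multiplication then gives $(I-A)\,S = S\,(I-A) = I$, which shows that $I-A$ is invertible with inverse $S = \sum_{k=0}^{\infty} A^k$, as claimed.

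The argument is entirely routine; the only step deserving a word of care is the interchange of the limit with multiplication, which is justified by the continuity of the bilinear map $(X,Y) \mapsto XY$ on finite-dimensional matrix space, equivalently by the bound $\norm{XY} \leq \norm{X}\,\norm{Y}$. I do not anticipate any genuine obstacle here, since completeness holds automatically in finite dimensions and the geometric bound $\norm{A^k} \leq \norm{A}^k$ settles both the convergence of the series and the decay of the remainder term in one stroke.
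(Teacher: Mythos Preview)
Your proof is correct and is the standard Neumann-series argument. The paper itself does not prove this lemma at all: it simply cites \citep{horn1990matrix} and uses the identity as a black box inside the proof of Proposition~\ref{pr:taylor}. So there is nothing to compare against, and your write-up would serve perfectly well as a self-contained justification if one wanted to include it.
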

Since $\gamma \leq \frac{1}{\beta}$, we have $\gamma\alpha I \preceq \gamma \nabla_{w,w} g \preceq I$, so $\norm{I-\gamma \nabla_{w,w} g} < 1$. By Lemma~\ref{lm:taylor series},
\begin{align*}
\nabla_{w,w}^{-1} g  =  \gamma (I - I + \gamma\nabla_{w,w} g )^{-1} =  \gamma \sum_{k=0}^{\infty} (I - \gamma \nabla_{w,w} g)^k = \gamma \sum_{k=0}^{\infty} A_{\infty}^k  
\end{align*}
Therefore, 
\begin{align*}
- \nabla_{\lambda, w} g \nabla_{w, w}^{-1} g  =  \left( - \gamma \nabla_{\lambda, w}g \right) \left( \frac{1}{\gamma}  \nabla_{w, w}^{-1} g \right) = B_\infty  \sum_{k=0}^{\infty} A_{\infty}^k  
\end{align*}

\end{proof}

\section{Detailed experimental setup}
In this appendix, we provide more details about the settings we used in each experiment. We use Adam~\citep{kingma2014adam} to
optimize the upper-level objective and vanilla gradient descent for the lower objective. We denote by $\hat{w}^*$ the results of 
running $T$ steps of gradient descent with step size $\gamma$.

\subsection{Data hypercleaning}
\label{sec:mnist_appendix}

In this appendix, we provide more details about the data hypercleaning experiment on MNIST from Section \ref{sec:mnist}.

Both the training and the validation sets consist of 5000 class-balanced examples from the MNIST dataset. The test set consists of the remaining examples. For each training example, with probability $\frac12$, we replaced the label with a uniformly random one.

For various $K$, we performed $K$-RMD for 1000 hyperiterations. Like in the toy experiment (Section \ref{sec:toy}) we adjusted the initial meta-learning rate $\eta_0$ for each $K$ so that the norm of the initial update was roughly the same for each $K$.

We asserted earlier that the reported F1 scores are not sensitive to our choice of threshold $\lambda_i < -3$.
To validate this assertion, we repeated the experiment for various thresholds. F1 scores are reported in the table below.

\begin{table}
\centering
\begin{tabular}{c|ccc}
$K$ & $\lambda_i < -4$ & $\lambda_i < -3$ & $\lambda_i < -1$ \\
\hline
1   & 0.84 & 0.84 & 0.84 \\
5   & 0.89 & 0.89 & 0.90 \\
25  & 0.89 & 0.89 & 0.89 \\
50  & 0.89 & 0.89 & 0.89 \\
100 & 0.89 & 0.89 & 0.89 \\
\end{tabular}
\end{table}
We only ran these experiments for $150$ hyperiterations, because the F1 score has essentially converged by that point.
Indeed, the plot below shows identification of corrupted labels for $K=1$, with cutoff $\lambda_i < -4$.
The X axis is in units of 1000 hyperiterations.
We see that $1$-RMD rapidly identifies most of the mislabeled examples, with a few false positives.

\begin{figure}
\centering
\includegraphics[width=0.4\textwidth]{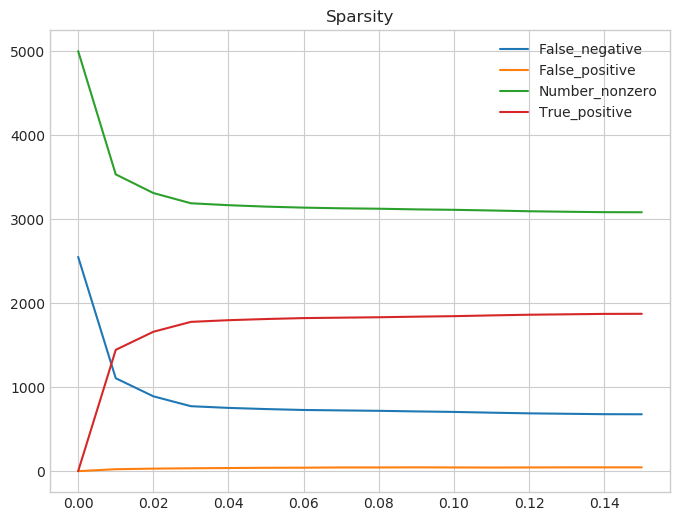}
\end{figure}

\subsection{Task interaction}
\label{sec:cifar_appendix}
We use $T=100$ iterations of gradient descent with learning rate $0.1$ in the lower objective which yields $\hat{w}_S^*$.
To ensure that $C$ is symmetric, and that $C_{ij}$ and $\rho$ are nonnegative, we re-parametrize them as 
$\rho = \text{softplus}(\nu)$ and $C = A+A^\top$, where $A_{ij} = \text{softplus}(B_{ij})$ and $B$ is a hyperparameter matrix.
Thus, the hyperparameters to be optimized are $\lambda=\{B, \nu\}$.

Rather than using raw pixels, we extract image features from the output of the average pooling layer in Resnet-$18$~\citep{he2016deep} which is trained on ImageNet~\citep{deng2009imagenet}.
We use the same data pre-processing that is used for training Resnet architecture.

When reporting test accuracy, we run 10 independent trials.
In each trial, we sample the training and validation datasets with a balanced set of $m$ examples each ($m = 50$ for CIFAR-10 and $m = 300$ for CIFAR-100)
and use the rest of the dataset for testing.
To avoid over-fitting, we use early stopping when the testing error does not improve for $500$ hyper-iterations.

Although we are using a similar setting as~\citet{franceschi2017forward}, our results on full back-propagation are quite different from theirs.
We believe it is because we are using a different network architecture and pre-processing method for feature extraction.

\subsection{One-shot classification}
\label{app:one_shot}
\paragraph{Dataset}
The Omniglot dataset \citep{lake2015human}, a popular benchmark for few-shot learning, is used in this experiment. We consider $5$-way classification with $1$ training and $15$ validation examples for each of the five classes.
To evaluate the generalization performance, we restrict the meta-training dataset to a random subset of $1200$ of the $1623$ Omniglot characters. 
The meta-validation dataset consists of $100$ other characters, and meta-testing dataset has the remaining $323$ characters. We use the meta-validation dataset for tuning the upper-level optimization parameters and report the performance of the algorithm on the meta-testing dataset. Note that no data augmentation method is used in the training. 

\paragraph{Neural Network and Optimization}
The overall neural network architecture is shown in Figure~\ref{fig:one_shot_nn}. Our architecture inherits the hyper-representation model of~\citet{franceschi2017bridge} with some modifications. The first two convolutional layers, parametrized by hyperparameter $\lambda = \{\lambda_{l_1}, \lambda_{l_2}\}$,
transform the input image into a ``hyper-representation'' space. The last three layers, parametrized by $w = \{w_{l_3}, w_{l_4}, w_{l_5}\}$ are fine-tuned in the lower-level optimization.
Additionally, we have regularization hyperparameters $\lambda_r = \{\rho_i\}_{i=1}^3 \cup \{c_j\}_{j=1}^3$.
The overall setup corresponds essentially to meta-learning the two bottom layers of a CNN; for each task, the weights in the first two layers are frozen, and the $k$-way classifier of the last three layers is fine tuned. Overall, the model has $\approx 110$k hyperparameters and $\approx 75$k parameters.

We use a meta-batch-size of $4$ in each hyper-iteration.
To limit the training time, we stop all the algorithms after $5000$ hyper-iterations. 
Needless to say, these results could be further improved by using data augmentation, higher meta-batch size, and running more hyper-iterations.
However, our current setup is selected so that all the experiments can be run in a reasonable amount of time, while sharing a similar setting used in practical one-shot learning.

\begin{figure}
  \centering
  \includegraphics[width=.7\linewidth]{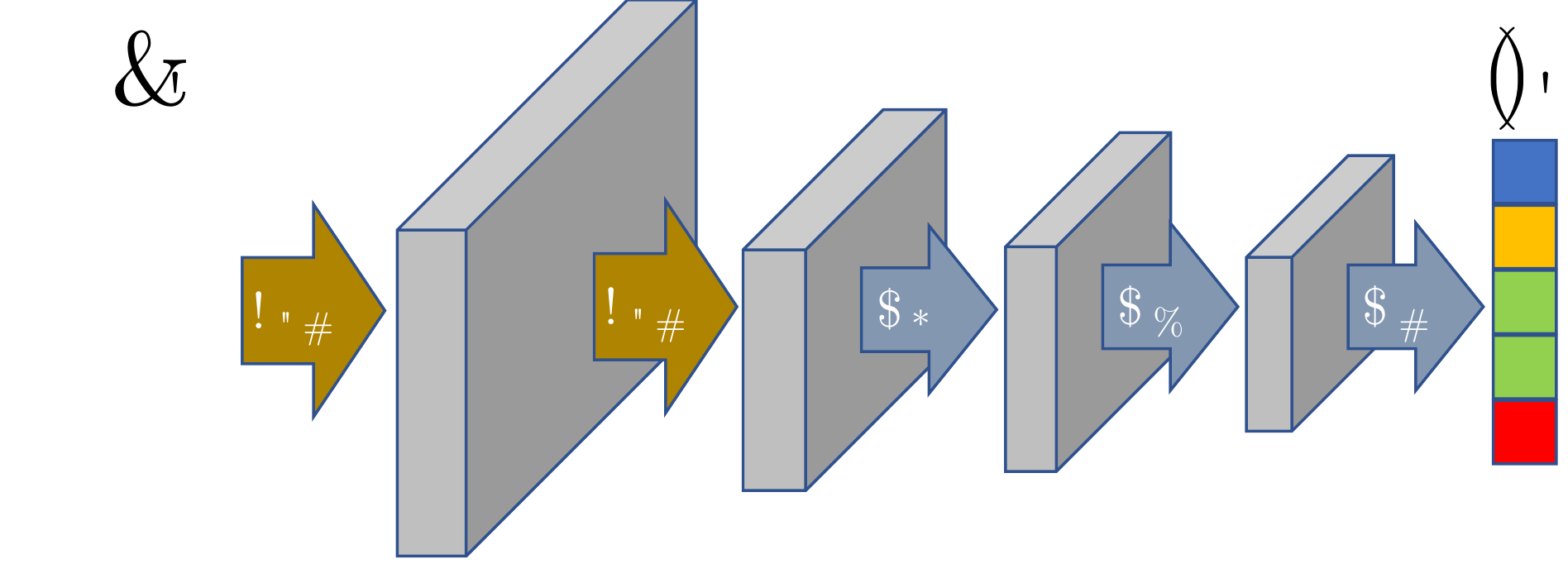}%
  \caption{\label{fig:one_shot_nn}One-shot learning network architecture. The first two convolutional layers map the input image into a ''hyper-representation'' space which is frozen while optimizing the lower-level objective.
The last three layers are tuned for each task and regularized to avoid overfitting.
All the convolutional layers have $64$ $3\times3$ kernels.
There is a max-pooling layer followed by a batch-normalization and a ReLU layer after each convolution.}%
\end{figure}

\end{document}